\documentclass{article}
\usepackage[T1]{fontenc}
\usepackage{iclr2027_conference,times}
\usepackage{amsmath,amssymb,amsthm}
\usepackage{graphicx}
\usepackage{booktabs,array,tabularx}
\usepackage{microtype}
\usepackage{flafter}
\usepackage[section]{placeins}
\usepackage{needspace}
\usepackage{hyperref}
\usepackage{url}
\hypersetup{
  colorlinks=false,
  pdfborder={0 0 1},
  citebordercolor={0 1 0},
  linkbordercolor={1 1 1},
  urlbordercolor={1 1 1},
  filebordercolor={1 1 1}
}
\usepackage{enumitem}
\setcitestyle{authoryear,round,citesep={;},aysep={,},yysep={,}}

\iclrfinalcopy
\newtheorem{definition}{Definition}
\newtheorem{theorem}{Theorem}[section]
\newtheorem{lemma}[theorem]{Lemma}
\newtheorem{proposition}[theorem]{Proposition}
\newtheorem{corollary}[theorem]{Corollary}
\newcommand{\Demand}{\operatorname{Demand}}
\newcommand{\Def}{\operatorname{Def}}
\newcommand{\GS}{\operatorname{GapScore}}
\newcommand{\Col}{\operatorname{Collision}}
\newcommand{\Novelty}{\operatorname{Novelty}}
\newcommand{\PAWC}{\operatorname{PAWC}}
\newcommand{\VS}{\operatorname{VS}}
\newcommand{\R}{\mathbb{R}}
\newcommand{\sphere}{\mathbb{S}^{d-1}}
\newcommand{\norm}[1]{\left\lVert#1\right\rVert}
\newcommand{\inner}[2]{\left\langle#1,#2\right\rangle}
\setlist[itemize]{noitemsep,leftmargin=*,topsep=0pt}

\title{Gap Detection: Mitigating Homogenization\\through Demand and Defensibility}
\author{
Xucheng Yu \\
University of Illinois Urbana-Champaign \\
\texttt{xy63@illinois.edu} \And
Haibo Jin \\
University of Illinois Urbana-Champaign \\
\texttt{haibo@illinois.edu} \And
Xiaoqi Wang \\
Starc Institute \\
\texttt{maiichann9106@gmail.com} \AND
Runze Kong \\
University of Michigan, Ann Arbor \\
\texttt{runzekog22@outlook.com} \And
Haohan Wang \\
University of Illinois Urbana-Champaign \\
\texttt{haohanw@illinois.edu}
}

\begin{document}
\maketitle
\lhead{}

\begin{abstract}
Language models often produce semantically similar answers to open-ended questions, even when many plausible alternatives exist. Generating useful alternatives requires identifying both what the query calls for and what existing responses already cover. We introduce Gap Detection, a framework that searches for relevant but unoccupied directions in a shared text embedding space. Its objective, GapScore, combines query relevance (Demand) with dissimilarity to the nearest existing item (Defensibility). The framework optimizes this product on the unit sphere, maps candidate directions to concept words with a language model, and uses task-specific filtering and constrained prompting to turn those concepts into text. On five curated topics and 90 INFINITY-CHAT queries, the reported mean relative reductions in output Collision are 12.8\% and 10.8\%, respectively. We also instantiate the framework for generative engine optimization over four product markets and for iterative synthetic-data augmentation. In the latter setting, with 100 training examples per method, Gap Detection reduces Collision from 0.324 to 0.298 relative to undirected accumulation and increases the Vendi Score from 27.16 to 29.20; the accuracy difference is one example on a 40-example test set. Component ablations expose the trade-off between semantic separation and relevance, while also showing that better direction scores do not uniformly yield better downstream metrics. These results support explicit gap search as a practical mechanism for diversifying content, with task validity checked separately from geometric novelty.
\end{abstract}

\section{Introduction}
\label{sec:intro}
Open-ended questions invite multiple reasonable answers, yet language models frequently return variations of the same idea. A request for a metaphor about time, for example, can repeatedly elicit familiar images such as rivers or weaving. This pattern has been documented within and across models using INFINITY-CHAT, a collection of real-world open-ended queries \citep{jiang2025hivemind}. Reference-free diversity metrics such as the Vendi Score likewise quantify the small effective number of semantic modes in such sets \citep{friedman2023vendi}. For a user seeking alternative perspectives, lexical variation is insufficient: answers must differ in substance while continuing to address the question.

This problem suggests a distinction between sampling another answer and identifying an answer that is missing. Changing the decoding distribution can alter which tokens are sampled, but it does not explicitly represent the semantic content already covered by a collection of responses. Existing diversity-aware methods such as Maximal Marginal Relevance (MMR) trade query relevance against redundancy when reranking a fixed candidate set \citep{carbonell1998mmr}; learning-to-rank methods likewise score those candidates \citep{liu2009learning}. Neither formulation searches unoccupied semantic space. Conversely, maximizing distance from previous responses alone can produce irrelevant concepts. A useful method must therefore keep new content connected to the user's demand while avoiding the nearest existing answer.

To address this problem, we introduce \textbf{Gap Detection}, a framework for discovering and realizing unoccupied semantic directions. Given a query and an occupied content set, the framework embeds both in a shared vector space \citep{mikolov2013word2vec}, then searches for directions with high \emph{GapScore}: the product of \emph{Demand} (query relevance) and \emph{Defensibility} (separation from the nearest occupied item). Directions are selected sequentially and added to the occupied set, so later searches account explicitly for earlier coverage.

A high-scoring vector, however, is not yet a usable answer. Gap Detection bridges this gap through language-based realization. Embedding retrieval supplies a short list of words near a discovered direction; Qwen selects concepts that can answer the query, and a second filter checks their contextual plausibility. A text generator then uses these concepts while being prompted to avoid frequent concepts and sentence patterns in existing responses. Finally, the generated text is evaluated for novelty and repetition. The geometric search identifies where to explore, while the mapping and validation stages determine whether the proposed direction can be expressed as meaningful content.

The same relevance--coverage distinction appears beyond open-ended answering. In generative engine optimization (GEO), a candidate position should address user needs while differing from competitors' claims; GEO work shows that supplied content can affect which sources a generative engine cites \citep{aggarwal2024geo}. Recursive training on generated data can reduce diversity \citep{shumailov2024collapse,alemohammad2024mad}, whereas retaining real and synthetic data can alter that trajectory \citep{gerstgrasser2024accumulating}. We study these as separate instantiations of the same objective, with setting-specific inputs, validity checks, and evaluation criteria.

Experiments examine both the effectiveness and the limits of this design. The reported mean relative Collision reduction is 12.8\% over five curated topics and 10.8\% over 90 open-ended queries. Additional experiments vary the source of the occupied responses and the model used to realize new directions. In four product markets, discovered positions receive citations in a controlled generative-engine simulation. In a small sentiment-data experiment, directed augmentation improves diversity relative to an accumulation baseline at the same final sample count. Ablations show that the product objective improves its intended relevance--separation score, but the full pipeline does not dominate every output-level metric.

Our contributions are threefold:
\begin{itemize}
\item \textbf{A coverage-aware objective.} We formulate gap discovery as joint optimization of Demand and Defensibility relative to an explicit occupied set.
\item \textbf{A search-to-text pipeline.} We combine spherical optimization, language-model concept mapping, and task-specific validation so that a geometric direction is tested before it is accepted as content.
\item \textbf{A cross-setting evaluation.} We evaluate the pipeline across open-ended generation, GEO positioning, and synthetic-data augmentation, reporting component trade-offs alongside aggregate gains. Appendix~\ref{app:theory} provides the geometric existence result, conditional optimization results, and their proofs; the main text focuses on how the method operates and what the experiments establish.
\end{itemize}

\section{Related Work}
\label{sec:related}
\paragraph{Output diversity and redundancy.}
Prior work shows that semantically repetitive outputs persist across model families and decoding configurations \citep{jiang2025hivemind}. These findings motivate evaluating similarity at the level of complete responses. We use mean pairwise embedding similarity, termed Collision here, and complement it with the Vendi Score, a spectral measure of diversity \citep{friedman2023vendi}. Gap Detection uses the occupied responses as an input to generation, rather than only as an evaluation set.

\paragraph{Relevance-aware diversification.}
Maximal Marginal Relevance (MMR) balances query relevance against similarity to previously selected documents \citep{carbonell1998mmr}. Gap Detection shares this redundancy-aware perspective. Its differences are a multiplicative objective, continuous search over embedding directions, and a realization stage that generates new text rather than only reordering existing documents. The product avoids an additive trade-off coefficient, but the complete pipeline still contains optimization and filtering hyperparameters. Prompted avoidance of repeated words is related to lexical constraints \citep{hokamp2017lexically}; unlike constrained decoding, our prompts do not guarantee that the generator obeys every constraint.

\paragraph{GEO and synthetic-data feedback.}
GEO studies how source content influences its visibility in generated answers \citep{aggarwal2024geo}. We consider an earlier decision: which plausible position should new content express relative to existing claims? Separately, recursive synthetic-data training can degrade generative distributions \citep{shumailov2024collapse,alemohammad2024mad}, while retaining real and past synthetic data can mitigate collapse in studied settings \citep{gerstgrasser2024accumulating}. Our augmentation experiment asks whether directing new samples toward semantic gaps adds value beyond retaining more samples. The documented procedure evaluates generated text with a downstream classifier and does not specify generative-model retraining, so it tests a narrower setting.

\section{Gap Detection}
\label{sec:method}
\subsection{Overview and Problem Setup}
\label{sec:overview}
Gap Detection takes an occupied content set $\mathcal C=\{c_1,\ldots,c_N\}$ and a demand set $\mathcal Q=\{(q_j,f_j)\}_{j=1}^M$, where $f_j\geq0$ and $\sum_j f_j=1$. For open-ended answering, $\mathcal C$ contains existing responses and $\mathcal Q$ can contain a single query with weight one. Let $E(x)\in\sphere$ be a normalized text embedding. We write $\mathbf e_i=E(c_i)$ and $\mathbf u_j=E(q_j)$, so cosine similarity is the inner product.

The output is a collection of proposed directions and their textual realizations. The procedure has four stages: represent demand and occupied content; search for directions that balance relevance and separation; map the directions to plausible concepts; and generate and check text. These stages address different failure modes. Search discourages repetition, concept mapping makes vectors interpretable, and validation rejects directions or outputs that do not satisfy the task. We describe the shared mechanism first and then specify its domain adaptations.

\begin{figure}[t]
\centering
\includegraphics[width=0.88\linewidth]{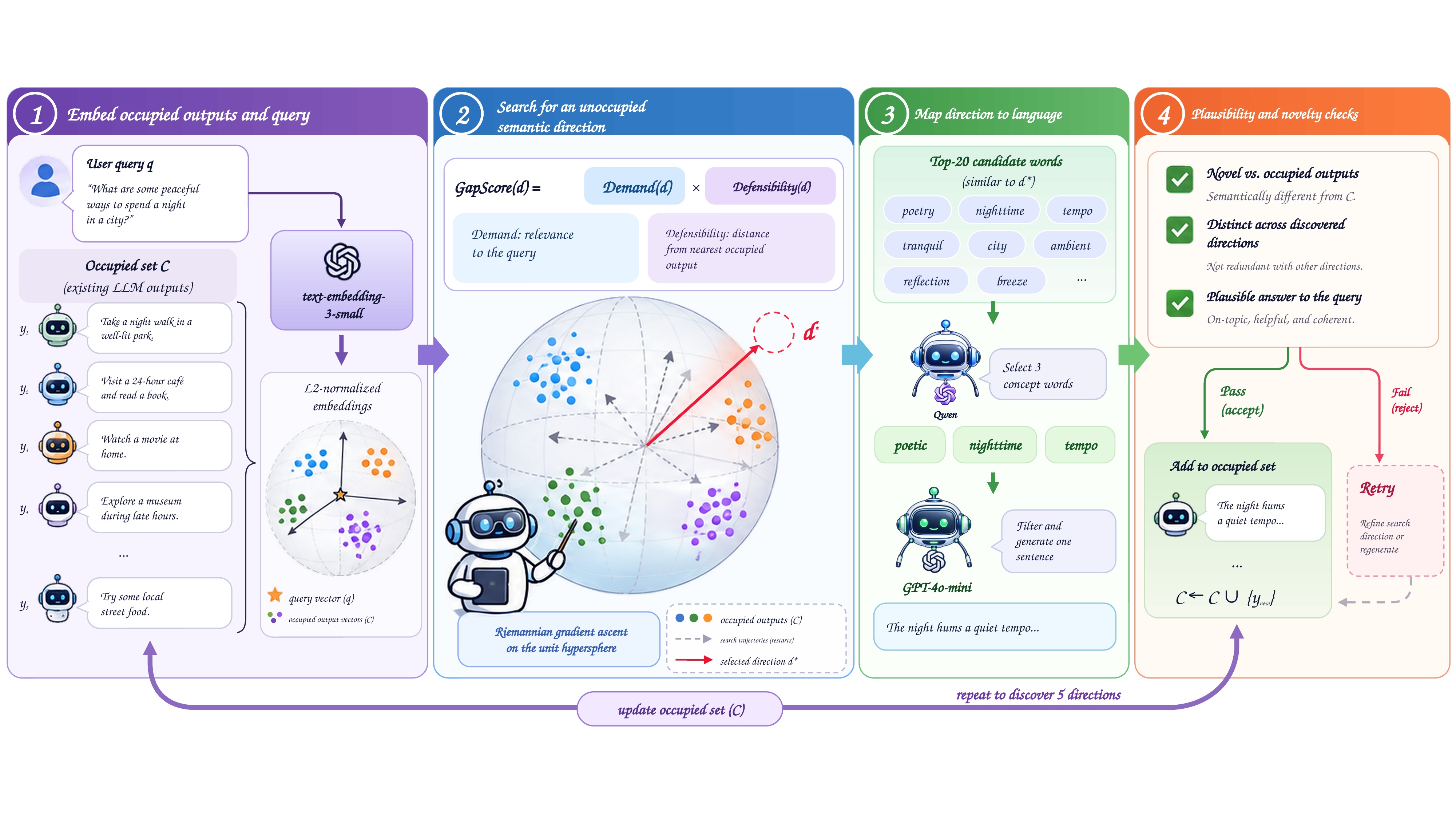}
\caption{Overview of Gap Detection. The pipeline embeds existing content and the query, searches the unit hypersphere for a high-GapScore direction, maps that direction to a concrete concept, and verifies the resulting candidate before accepting it. A rejected candidate triggers another search; an accepted direction is added to the occupied set so later directions cover a different region.}
\label{fig:pipeline}
\end{figure}

Figure~\ref{fig:pipeline} summarizes the search-to-text interface. The occupied set and query enter the same embedding space, while only the direction is passed to the concept stage through retrieved candidate words. The verifier closes the loop: it checks coherence, task relevance, and novelty, then either returns an accepted output or feeds the rejection back into the next direction search. This separation makes it possible to diagnose whether a failure comes from geometric search, concept mapping, or realization.

\subsection{Scoring Relevant and Unoccupied Directions}
\label{sec:objective}
\begin{definition}[Demand]
For a unit direction $\mathbf z$, the frequency-weighted relevance to the demand set is
\begin{equation}
\Demand(\mathbf z)=\sum_{j=1}^M f_j\inner{\mathbf u_j}{\mathbf z}
=\inner{\mathbf v}{\mathbf z},\qquad
\mathbf v=\sum_{j=1}^M f_j\mathbf u_j.
\label{eq:demand}
\end{equation}
\end{definition}
Demand preserves the connection to the task. For one query it is simply query--direction cosine similarity; for multiple queries it weights their contributions. These weights represent the supplied demand specification, not a learned relevance model.

\begin{definition}[Defensibility]
For an occupied embedding set $\mathcal O$, define
\begin{equation}
\Def(\mathbf z;\mathcal O)=1-\max_{\mathbf e\in\mathcal O}\inner{\mathbf e}{\mathbf z}.
\label{eq:def}
\end{equation}
\end{definition}
The nearest occupied item determines this score. A candidate close to even one existing item is penalized, regardless of its distance from the others. Defensibility lies in $[0,2]$ for unit vectors; it can exceed one when every occupied item has negative similarity to the direction. It measures geometric separation, not factual validity or commercial defensibility.

\begin{definition}[GapScore]
The joint direction score is
\begin{equation}
\GS(\mathbf z;\mathcal O)=\Demand(\mathbf z)\Def(\mathbf z;\mathcal O).
\label{eq:gap}
\end{equation}
\end{definition}
For nonnegative Demand, the product rewards both relevance and separation and vanishes when either is zero. Directions with negative Demand receive nonpositive scores. Thus, an empty region is not automatically desirable: it must also align with demand. The product can still trade off two positive factors, so it supplies a preference rather than a hard lower bound on either factor. Appendix~\ref{app:product} discusses its connection to bargaining objectives under explicit assumptions.

Demand-only optimization ignores the occupied set, whereas Defensibility-only optimization can leave the query. Their product couples relevance and separation; concept and output checks then supply the task-specific validity test.

\subsection{Searching for Multiple Directions}
\label{sec:search}
We approximately maximize Eq.~\eqref{eq:gap} on the unit sphere. Let $\mathbf e_*$ be a nearest occupied vector at the current iterate. Where that nearest vector is unique, the Euclidean gradient is
\begin{equation}
\mathbf a=\Def(\mathbf z;\mathcal O)\mathbf v-\Demand(\mathbf z)\mathbf e_*.
\label{eq:gradient}
\end{equation}
The first term moves toward demand, and the second moves away from the closest occupied direction. We project this update onto the sphere's tangent space and normalize after stepping:
\begin{equation}
\mathbf g=(I-\mathbf z\mathbf z^\top)\mathbf a,\qquad
\mathbf z\leftarrow\frac{\mathbf z+\eta\mathbf g}{\norm{\mathbf z+\eta\mathbf g}}.
\label{eq:update}
\end{equation}
This is a tangent update followed by a normalization retraction \citep{absil2008optimization}. At a nearest-neighbor tie, an active neighbor supplies the update direction. Multiple starts reduce reliance on any one initialization; they do not certify global optimality for this nonsmooth objective.

To obtain $K$ directions, the search at round $k$ uses
\begin{equation}
\mathcal O_k=\{\mathbf e_1,\ldots,\mathbf e_N\}\cup
\{\mathbf z_1,\ldots,\mathbf z_{k-1}\}.
\label{eq:occupied}
\end{equation}
Adding a selected direction makes exact rediscovery have zero Defensibility at the next round. The procedure therefore encourages successive directions to explore different regions while retaining the same demand vector. The reported implementation uses demand-aligned, perturbed-demand, and random starts, 15 restarts per strategy, 100 updates per restart, and $\eta=0.05$. The answering and GEO experiments use $K=5$.

The geometry also provides a useful diagnostic. If demand has a nonzero component orthogonal to the span of occupied content, normalizing that component gives a direction with Defensibility exactly one and positive Demand. This yields a lower bound on the best achievable geometric score. It does not guarantee that a corresponding natural-language answer exists. The precise condition, proof, and numerical comparison are in Appendix~\ref{app:geometry}; the language stages below remain necessary even when the bound holds.

\subsection{Mapping Directions to Plausible Concepts}
\label{sec:mapping}
A direction must be translated into concepts before a generator can use it. We construct a vocabulary from the Qwen2.5-3B-Instruct tokenizer, filter it to 13,409 English words, and embed these words with the same encoder used for the content. A query- and corpus-dependent screening score retains 10,000 words. For each direction, cosine retrieval then selects 20 unused candidate words near that direction. This screening is computational; it does not decide whether a word answers the query.

Qwen receives the query and the candidate words and selects up to three concepts that could form concrete answers. GPT-4o-mini subsequently checks whether the selected concepts fit the requested kind of answer, including whether a proposed word merely restates the query. If fewer than three words pass, the pipeline examines further candidates, with at most ten rounds. Accepted words are not reused across directions. Appendix~\ref{app:implementation} gives the vocabulary score and candidate-handling details, and Appendix~\ref{app:prompts} gives the available prompt templates.

This separation matters because geometric proximity and linguistic usefulness are different tests. A nearby token may be a synonym of the question rather than an answer to it; a highly separated token may be unrelated to the task. The language-model stages evaluate these distinctions after the embedding search. They provide a plausibility screen, not a correctness certificate.

\subsection{Generating and Checking Text}
\label{sec:realization}
For each accepted concept group, GPT-4o-mini generates one sentence conditioned on the original query. The prompt also includes two kinds of avoidance instructions derived from the occupied content. First, the pipeline identifies 12 frequent words close to the occupied-content centroid and asks the generator to avoid them. Second, it extracts recurring sentence patterns and asks for a different construction. Concept guidance specifies what to express, while these instructions discourage a return to the content and phrasing already represented in $\mathcal C$.

The realized sentences are embedded again. A novelty check requires the maximum similarity to the original occupied responses to be below 0.7. A separate check compares opening words, and another checks whether the maximum pairwise similarity among the new sentences is below 0.8. The opening-word check is lexical; the two similarity checks are semantic proxies. Neither establishes human-perceived creativity or factual accuracy. Measuring the realized text is necessary because a generator may move away from the direction suggested by its concept words.

The concept words form an inspectable intermediate representation: a failure can be attributed to direction search, concept selection, or sentence realization.

\subsection{Adapting the Framework to GEO and Synthetic Data}
\label{sec:adapt}
\paragraph{GEO positioning.}
The occupied set becomes a collection of competitor product claims, and demand consists of weighted search queries. The concept filter asks whether a candidate is plausible within the product category; the generator turns accepted words into a proposed claim. A controlled citation simulation then measures its visibility. Candidate claims are ranked by direction GapScore multiplied by the visibility score. Because the framework does not receive the target product's specifications, its output is a proposed market position requiring factual verification before use.

\paragraph{Synthetic-data augmentation.}
The occupied set becomes the current synthetic sample pool, and task-class descriptions with class-frequency weights specify demand. The concept filter checks task relevance, and the generator creates labeled examples. For sentiment classification, VADER \citep{hutto2014vader} supplies an additional polarity check: compound scores above 0.05 pass as positive and below $-0.05$ as negative. Candidate words are also reranked by absolute polarity. Directed methods retry generation up to four rounds to reach the accepted-example target. Final comparisons use $K=1$ per label and generation; a separate direction-set analysis uses $K=3$. The downstream classifier is trained on the resulting data and evaluated on a disjoint real test set. Its accuracy is an outcome measure, not an input to direction selection.

\section{Experiments}
\label{sec:experiments}
We ask whether Gap Detection reduces repetition in realized outputs, how each score component affects the result, and whether the search mechanism remains useful when the occupied content changes from responses to claims or training examples. The three settings share the embedding and search machinery but use separate task-specific evaluation protocols.

\subsection{Evaluation Protocol and Metrics}
\label{sec:metrics}
All text embeddings use \texttt{text-embedding-3-small} with 1,536 dimensions and L2 normalization. Qwen2.5-3B-Instruct-Q6\_K supplies concept decoding; GPT-4o-mini supplies the default plausibility filtering and text generation. Unless explicitly varied, these components are fixed across the direction-selection ablations.

\begin{definition}[Collision and novelty]
For an embedded text set $X=\{x_i\}_{i=1}^n$, $n\geq2$, and a generated text $g$, define
\begin{align}
\Col(X)&=\frac{2}{n(n-1)}\sum_{i<j}\inner{E(x_i)}{E(x_j)},\label{eq:collision}\\
\Novelty(g;\mathcal C)&=1-\max_{c\in\mathcal C}\inner{E(g)}{E(c)}.
\label{eq:novelty}
\end{align}
\end{definition}
Lower Collision indicates lower mean similarity; a value of zero means zero average similarity, not necessarily pairwise orthogonality. Novelty measures separation of a realized text from its nearest original reference. We report relative Collision reduction as $100(C_{\rm before}-C_{\rm after})/C_{\rm before}$ and preserve the reported mean of these relative reductions. A mean relative reduction need not equal the reduction computed from two aggregate means. The 90-query novelty statistic is the fraction of queries whose best generated sentence has Novelty greater than 0.3.

For synthetic datasets, we also use the Vendi Score \citep{friedman2023vendi}, $\VS=\exp(-\sum_i\lambda_i\log\lambda_i)$, where $\lambda_i$ are eigenvalues of the normalized cosine Gram matrix. It captures spectral diversity beyond a mean similarity. Downstream utility is measured by classification accuracy. These metrics are distinct from direction GapScore: optimizing an embedding objective is not itself evidence of improved output quality. The available results are point estimates; no repeated-run uncertainty estimates are supplied.

\subsection{Open-Ended Generation}
\label{sec:llm_exp}
\paragraph{Setup.}
The topic study uses five manually selected themes, with five query variants per theme and 20 GPT-4o responses as the occupied set. A second study uses 90 manually selected INFINITY-CHAT queries. Five directions and corresponding sentences are produced for each query. Table~\ref{tab:llm_main} summarizes the before/after Collision measurements. Detailed query-group results and reporting qualifications are given in Appendix~\ref{app:llm_results}.

\begin{table}[htbp]
\caption{Reported Collision before and after Gap Detection. The last column is the reported mean relative reduction. Topic averages and the 90-query aggregate are separate evaluations.}
\label{tab:llm_main}
\centering\small
\begin{tabular}{@{}lrrr@{}}
\toprule
Evaluation & Before $\downarrow$ & After $\downarrow$ & Reduction $\uparrow$\\
\midrule
Time metaphor & 0.7362 & 0.6573 & 10.7\%\\
Loneliness metaphor & 0.7449 & 0.6634 & 10.9\%\\
Redefine courage & 0.8046 & 0.7160 & 11.0\%\\
Memory as substance & 0.8163 & 0.7080 & 13.3\%\\
Hope via natural elements & 0.8178 & 0.6712 & 17.9\%\\
\midrule
Five-topic mean (reported) & 0.784 & 0.683 & 12.8\%\\
90 INFINITY-CHAT queries & 0.7766 & 0.6915 & 10.8\%\\
\bottomrule
\end{tabular}
\end{table}

\paragraph{Results.}
All five topics exhibit lower reported Collision, with reductions from 10.7\% to 17.9\%. The 90-query experiment shows a similar aggregate pattern, with Collision decreasing from 0.7766 to 0.6915 and a largest single-query reduction of 20.5\%. The best sentence passes the novelty threshold on 88 of 90 queries (97.8\%). These observations indicate separation from existing responses under the chosen encoder; they do not establish that the sentences are better answers according to human judgments.

The study does not include a matched rerun of temperature or min-$p$ baselines. The decoding experiments in prior work \citep{jiang2025hivemind} provide motivation, but their results are not a controlled numerical baseline for this pipeline. Likewise, direction search, concept filtering, and avoidance prompting contribute jointly to the observed end-to-end change. The ablation below isolates the direction-selection rule within that shared pipeline.

\paragraph{Component ablation.}
We compare random directions, Demand-only search, Defensibility-only search, and the product objective on five topics (Table~\ref{tab:llm_ablation}). GapScore achieves the highest direction score, 0.375, compared with 0.298 for Demand-only and $-0.228$ for Defensibility-only. However, Defensibility-only achieves the largest Collision reduction, and random directions achieve the highest novelty-check pass rate. The full method's pass rate is 84\% in this separate ablation. Thus, both terms help balance the geometric objective, but the ablation does not establish uniform superiority in realized-text metrics. The relatively strong random-direction result also suggests that the shared filtering and prompting stages make a substantial contribution.

\begin{table}[htbp]
\caption{Direction-selection ablation on five topics. Direction GapScore evaluates the optimized vectors; the other columns evaluate realized outputs. Best values in each column are bold. These runs are separate from Table~\ref{tab:llm_main}.}
\label{tab:llm_ablation}
\centering\small
\begin{tabular}{@{}lrrr@{}}
\toprule
Method & Direction GapScore $\uparrow$ & Collision reduction $\uparrow$ & Novelty pass $\uparrow$\\
\midrule
Random direction & 0.003 & 10.3\% & \textbf{100\%}\\
Demand only & 0.298 & 7.8\% & 92\%\\
Defensibility only & $-0.228$ & \textbf{11.5\%} & 96\%\\
GapScore & \textbf{0.375} & 10.7\% & 84\%\\
\bottomrule
\end{tabular}
\end{table}

\FloatBarrier
\subsection{Transfer across Content Sources and Generators}
\label{sec:transfer}
The occupied responses and the realization model affect different stages of Gap Detection. Changing the source of existing responses changes which directions the search must avoid. Changing the realization model instead tests how effectively a discovered direction is turned into text. We evaluate these dependencies separately on three topics.

\paragraph{Changing the occupied set.}
Each of four models supplies its own set of 20 occupied responses, while GPT-4o-mini remains the realization model. Collision decreases for every source (Table~\ref{tab:separate}), with reported reductions of 8.8--12.7\% and a mean of 10.4\%. The novelty-pass rate averages 96.7\%. This result supports using the same search and realization procedure across the tested content sources; it does not require the occupied responses to come from the generator that will produce the new answer.

The separate-source study reports reductions of 11.2\% (GPT-4o-mini), 12.7\% (Claude-Sonnet-4.5), 8.8\% (Gemini-2.0-Flash), and 8.8\% (Qwen2.5-3B), averaging 10.4\% with a 96.7\% novelty-pass rate. The complete table is in Appendix~\ref{app:multimodel}.

A mixed-source experiment combines five responses from each of these four models into one occupied set. Its recorded Collision values are 0.602 before intervention and 0.561 afterward, with a reported reduction of 7.4\%. This tests whether the search can account for content contributed by several models simultaneously; supplementary source-similarity diagnostics are in Appendix~\ref{app:multimodel}.

\paragraph{Changing the realization model.}
With 20 GPT-4o-mini responses held fixed as the occupied set, the reported Collision reduction is 9.0\% for GPT-4o-mini, 11.6\% for GPT-4o, and 14.8\% for Claude-Sonnet-4.5, with novelty-pass rates of 80.0\%, 100.0\%, and 100.0\%. The search can therefore be robust to the source set while remaining sensitive to how a direction is expressed. These output comparisons do not establish a general ranking of model quality.

\FloatBarrier
\subsection{GEO Market Positioning}
\label{sec:geo_exp}
\paragraph{Setup and visibility measurement.}
We use Amazon product metadata \citep{mcauley2023amazon} for water filters, protein powder, sports water bottles, and yoga mats. Grouping claims by brand and filtering sparse brands leaves 3--17 brands and 12--142 claims per market. Each market has 15 manually designed, weighted search queries. The method searches for five directions and realizes them as proposed claims.

To evaluate a claim, GPT-4o-mini receives four competitor claims as sources [1--4] and the candidate as source [5], then answers a market query with inline citations. We compute a position-adjusted cited-word share inspired by GEO \citep{aggarwal2024geo}:
\begin{equation}
\PAWC(g)=\frac{\sum_{j=0}^{m-1}\mathbb{1}\{[5]\in s_j\}\,|s_j|\exp(-j/m)}
{\sum_{j=0}^{m-1}|s_j|},
\label{eq:pawc}
\end{equation}
where $s_j$ is an answer sentence and $|s_j|$ its word count. This is the implementation recorded for these experiments, not a measurement of live retrieval or a calibrated citation probability. The same simulation supplies the visibility factor used to select the best candidate, so best-candidate results are selection-set results.

\begin{table}[htbp]
\caption{GEO results. Top: best direction per market, selected using GapScore $\times$ PAWC. Bottom: a separate ablation averaged across four markets. The blocks have different aggregation rules.}
\label{tab:geo}
\centering\small
\begin{tabular}{@{}lrrr@{}}
\toprule
Market or method & Direction GapScore $\uparrow$ & PAWC $\uparrow$ & Product $\uparrow$\\
\midrule
Water filters & 0.410 & 0.518 & 0.213\\
Protein powder & 0.479 & 0.540 & 0.258\\
Sports water bottles & 0.398 & 0.513 & 0.204\\
Yoga mats & 0.403 & 0.524 & 0.211\\
Reported market mean & 0.423 & 0.524 & 0.221\\
\midrule
Random direction & $-0.006$ & 0.469 & $-0.003$\\
Demand only & 0.279 & \textbf{0.508} & 0.143\\
Defensibility only & $-0.164$ & 0.487 & $-0.080$\\
GapScore & \textbf{0.307} & 0.482 & \textbf{0.149}\\
\bottomrule
\end{tabular}
\end{table}

\paragraph{Results and interpretation.}
The selected positions have a reported mean product score of 0.221 (Table~\ref{tab:geo}). For water filters, the words \emph{campground, wilderness, hike} express an outdoor-use direction; its GapScore is 0.410 and PAWC is 0.518. A more generic direction, \emph{screened, cleansing, reservoir}, has a higher PAWC of 0.544 but a lower GapScore of 0.135, giving a smaller product of 0.073. This example illustrates the intended distinction between being cited and identifying a differentiated position.

In the separate ablation, GapScore has the highest product score, 0.149 versus 0.143 for Demand-only, while Demand-only has the highest PAWC, 0.508 versus 0.482. The evidence therefore supports better joint relevance--separation scoring in this simulation, rather than a standalone visibility advantage. Complete market factors, concept words, and water-filter directions are in Appendix~\ref{app:geo_results}.

The factors expose the market trade-off: protein powder has Demand 0.383 and Defensibility 1.249, while yoga mats have Demand 0.763 and Defensibility 0.528. Defensibility above one does not verify a product property.

\FloatBarrier
\subsection{Iterative Synthetic-Data Augmentation}
\label{sec:synth_exp}
\paragraph{Setup.}
We use SST-2 sentiment examples \citep{socher2013sst}, with ten seed examples per class and a disjoint test slice of twenty examples per class. Across five generation rounds, accumulation methods target eight additional accepted examples per class per round. A TF-IDF logistic-regression classifier is trained on each round's data. The final comparison has 100 training examples per method. No Intervention replaces the preceding pool with newly generated text; Data Accumulation retains prior data and adds undirected samples; GapScore retains prior data and adds directed samples with the filtering in Section~\ref{sec:adapt}. These are controlled baselines inspired by replacement and accumulation studies, not exact replications of their model-training protocols.

\begin{table}[htbp]
\caption{Synthetic-data results at generation five. All three methods have 100 training examples. Accuracy is evaluated on 40 disjoint real examples; the 2.5-point difference corresponds to one prediction.}
\label{tab:synth}
\centering\small
\begin{tabular}{@{}lrrr@{}}
\toprule
Method & Collision $\downarrow$ & Vendi Score $\uparrow$ & Accuracy $\uparrow$\\
\midrule
No Intervention & 0.362 & 9.06 & 60.0\%\\
Data Accumulation & 0.324 & 27.16 & 60.0\%\\
GapScore & \textbf{0.298} & \textbf{29.20} & \textbf{62.5\%}\\
\bottomrule
\end{tabular}
\end{table}

\paragraph{Diversity and downstream utility.}
GapScore lowers Collision from 0.324 to 0.298 relative to Data Accumulation and raises Vendi from 27.16 to 29.20 (Table~\ref{tab:synth}). Its 62.5\% accuracy exceeds the two baselines' 60.0\% by one test example. This is a positive point estimate in a small task, not evidence of a statistically reliable accuracy gain. Matching accepted sample counts also does not match API-call budgets: directed generation may make additional calls when candidates fail the polarity check.

Separate four-generator runs show lower Collision for GapScore than accumulation in all four cases, with differences between $-0.019$ and $-0.028$. Vendi improves in three cases, but decreases by 0.74 for Qwen2.5-3B. These reruns have different numerical deltas from Table~\ref{tab:synth}; they are reported separately in Appendix~\ref{app:synth_results}. No model-wise downstream accuracy table is available for these reruns. The results support a recurring reduction in average similarity, while exposing a case where spectral diversity moves in the opposite direction.

\paragraph{Direction-set analysis.}
A separate $K=3$ analysis measures how a set of directions covers demand. For $S=\{\mathbf z_1,\ldots,\mathbf z_K\}$, define
\begin{equation}
\operatorname{SetGapScore}(S)=
\left(\sum_j f_j\max_{\mathbf z\in S}\inner{\mathbf u_j}{\mathbf z}\right)
\frac{1}{|S|}\sum_{\mathbf z\in S}\Def(\mathbf z;\mathcal O_1).
\label{eq:setscore}
\end{equation}
GapScore obtains 0.276, compared with 0.199 for Demand-only, 0.023 for random directions, and $-0.029$ for Defensibility-only. Demand-only repeats the same direction (pairwise cosine 1.000), whereas GapScore's selected directions have similarities of 0.11--0.39. The maximum in Eq.~\eqref{eq:setscore} does not reward repeated coverage of the same query. This is a diagnostic of the intended direction objective, not an independent downstream quality metric or a submodular guarantee for the full product. The $K=3$ results should not be combined with the $K=1$ augmentation comparison.

\section{Discussion and Limitations}
\label{sec:limitations}
The central result is that an explicit representation of prior content can guide a language pipeline toward less repetitive outputs. The method separates three questions: where the occupied set leaves room, whether a concept fits the task, and whether its realization is useful. The ablations show why those questions should remain separate. Better geometric scores can coexist with weaker novelty-check rates or lower citation share; direction optimization alone does not explain all end-to-end gains.

The evaluation has several limits. Collision and Vendi depend on the encoder used for both search and evaluation, and no independent human assessment of answer quality is reported. The query subset is manually selected. Some aggregation details remain unresolved in the experimental record, and raw outputs and repeated-run uncertainty are needed for a complete replication. GEO results use fixed source positions in a simulation, do not test live retrieval, and do not verify that a proposed claim is true of a target product. Sparse markets can also make apparent gaps easier to find. Finally, the sentiment study uses a small seed and test set, a lightweight classifier, and no documented generator-retraining procedure; it does not establish prevention of collapse in recursively retrained foundation models. VADER filtering may favor simple polarity cues, and matched training-set sizes do not isolate its effect from direction selection.

\section{Conclusion}
Gap Detection makes prior semantic coverage an explicit input to generation. It combines Demand and Defensibility to search for directions, maps them to concepts, and uses task-specific checks to realize them as text. The reported experiments show lower Collision across the open-ended generation studies and useful diversity gains in a small augmentation task, with a GEO study illustrating differentiated position discovery. The mixed ablations also identify the remaining challenge: translating a well-separated, relevant direction into content whose quality improves under independent evaluation.

\clearpage
\subsection*{AI Use Statement}
In this work, we used generative AI tools for several assistive purposes. We used them to run parts of the experimental pipeline (executing the embedding, search, mapping, and verification steps of the method and collecting the resulting metrics), to retrieve and discover related work (identifying and locating candidate citations for concepts and comparisons already determined by the authors), to draft and polish sections of the manuscript's writing (producing initial phrasing for some passages and improving clarity, wording, and consistency of the prose), and to check the derivations underlying the propositions and theorem in Appendix~\ref{app:theory} against standard results after the authors formulated the claims. We have not used generative AI tools to design the core method or decide which results to report; those decisions were made by the authors. We have reviewed all AI-assisted work: we checked that AI-run experiments used the intended settings and reproduced consistent numbers across repeated runs, we verified that every AI-suggested citation is accurate and genuinely relevant to the point it supports, we independently re-derived the proof steps that AI helped check, and we verified that the drafted and polished prose preserved the original meaning without introducing unsupported claims. We take responsibility for the final content of this work, including text, claims, and artifacts produced with the aid of generative AI.

\subsection*{Ethics Statement}
The reported evaluations use existing datasets and controlled generated content. The GEO experiment evaluates proposed claims in a citation simulation; a plausibility filter does not establish that a real product has the proposed properties. Any deployment would require factual verification. Similarly, semantic novelty does not imply that an answer is safe, correct, or useful. Dataset licenses, privacy requirements, and provider terms must be respected when distributing the data and generated examples. The sentiment experiment trains a small downstream classifier and should not be interpreted as validation of a deployed model.

\subsection*{Reproducibility Statement}
Section~\ref{sec:method} specifies the scoring and optimization procedure. Appendix~\ref{app:implementation} records implementation settings, Appendix~\ref{app:prompts} reproduces the available prompts, and Appendix~\ref{app:theory} contains the theoretical statements and proofs. Appendices~\ref{app:llm_results}--\ref{app:synth_results} give the supplementary results. The materials contain aggregate results rather than executable experiment code, raw outputs, and a complete run manifest.

\clearpage
\appendix
\section{Theoretical Analysis}
\label{app:theory}
This appendix collects all formal results and proofs. The geometric statements concern vectors in an embedding space. They do not establish that every such vector can be realized as valid text. Results borrowed from smooth optimization and submodular maximization are stated with their additional assumptions, rather than applied unconditionally to the implemented search.

\subsection{Existence of a Geometric Gap}
\label{app:geometry}
Let $\mathcal E=\{\mathbf e_1,\ldots,\mathbf e_N\}\subset\sphere$, let $\mathcal S=\operatorname{span}(\mathcal E)$, and let $P$ and $P^\perp=I-P$ be the orthogonal projectors onto $\mathcal S$ and its complement. The demand vector is $\mathbf v=\sum_j f_j\mathbf u_j$.

\begin{lemma}[Projection invariance]
\label{lem:projection}
For any vector $\mathbf z$, the inner-product extension of Defensibility satisfies
\[
\Def(\mathbf z;\mathcal E)=1-\max_i\inner{\mathbf e_i}{P\mathbf z}.
\]
\end{lemma}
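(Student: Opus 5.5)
The plan is to show that each inner product entering the maximum in the definition of Defensibility is unchanged when $\mathbf z$ is replaced by its projection $P\mathbf z$. Taking the maximum over $i$ and subtracting from one then gives the identity in Lemma~\ref{lem:projection}. The phrase ``inner-product extension'' means we read Eq.~\eqref{eq:def} with $\mathcal O=\mathcal E$ as the formula $1-\max_i\inner{\mathbf e_i}{\mathbf z}$ for arbitrary $\mathbf z\in\R^d$, not only for unit vectors. With this reading, both sides are well defined for every $\mathbf z$, even though $P\mathbf z$ is generally not a unit vector.

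First, decompose $\mathbf z=P\mathbf z+P^\perp\mathbf z$, where $P^\perp\mathbf z$ lies in $\mathcal S^\perp$. Each $\mathbf e_i$ lies in $\mathcal S=\operatorname{span}(\mathcal E)$, so $\inner{\mathbf e_i}{P^\perp\mathbf z}=0$. Equivalently, since $P$ is an orthogonal projector it is self-adjoint, and $P\mathbf e_i=\mathbf e_i$, so
\[
\inner{\mathbf e_i}{P\mathbf z}=\inner{P\mathbf e_i}{\mathbf z}=\inner{\mathbf e_i}{\mathbf z}.
\]
This holds for every $i=1,\ldots,N$.

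Second, because the finite families $\{\inner{\mathbf e_i}{\mathbf z}\}_i$ and $\{\inner{\mathbf e_i}{P\mathbf z}\}_i$ coincide term by term, their maxima coincide. Hence
\[
\Def(\mathbf z;\mathcal E)=1-\max_i\inner{\mathbf e_i}{\mathbf z}=1-\max_i\inner{\mathbf e_i}{P\mathbf z}.
\]

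There is no real obstacle in this argument. The only point needing care is the domain issue: the original definition is stated for unit directions, so we must make explicit that the right-hand side uses the extended formula rather than normalizing $P\mathbf z$. Normalizing would rescale the inner products and break the equality. We will also note the consequence used later in Appendix~\ref{app:geometry}. If $\mathbf z\in\mathcal S^\perp$, then $P\mathbf z=0$, so every inner product vanishes and $\Def(\mathbf z;\mathcal E)=1$. This is exactly the value claimed in Section~\ref{sec:search} for the normalized orthogonal component of demand.
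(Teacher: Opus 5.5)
Your proof is correct and matches the paper's argument: you use $P\mathbf e_i=\mathbf e_i$ together with self-adjointness of $P$ to obtain $\inner{\mathbf e_i}{P\mathbf z}=\inner{\mathbf e_i}{\mathbf z}$ for each $i$, and then take the maximum. Your remark that the right-hand side must use the unnormalized inner-product formula, since $P\mathbf z$ is generally not a unit vector, usefully spells out what the paper leaves implicit in the phrase ``inner-product extension.''
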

\begin{proof}
Each $\mathbf e_i$ lies in $\mathcal S$, so $P\mathbf e_i=\mathbf e_i$. Self-adjointness gives $\inner{\mathbf e_i}{\mathbf z}=\inner{P\mathbf e_i}{\mathbf z}=\inner{\mathbf e_i}{P\mathbf z}$. Taking the maximum proves the identity.
\end{proof}

\begin{theorem}[Unmet-demand lower bound]
\label{thm:unmet}
Suppose $a=\norm{P^\perp\mathbf v}>0$. Then the unit vector
\[
\mathbf z_0=\frac{P^\perp\mathbf v}{\norm{P^\perp\mathbf v}}
\]
satisfies $\Def(\mathbf z_0;\mathcal E)=1$ and $\Demand(\mathbf z_0)=a$. Consequently,
\begin{equation}
\max_{\mathbf z\in\sphere}\GS(\mathbf z;\mathcal E)\geq a.
\label{eq:lowerbound}
\end{equation}
\end{theorem}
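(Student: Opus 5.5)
The plan is to verify the two claimed values for $\mathbf z_0$ directly and then read off the lower bound by evaluating the objective at that single point. Since $a=\norm{P^\perp\mathbf v}>0$, the vector $\mathbf z_0$ is well defined and has unit norm, so $\mathbf z_0\in\sphere$ is a feasible point for the maximization in Eq.~\eqref{eq:lowerbound}.

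For Defensibility, I will invoke Lemma~\ref{lem:projection}: $\Def(\mathbf z_0;\mathcal E)=1-\max_i\inner{\mathbf e_i}{P\mathbf z_0}$. Because $\mathbf z_0$ lies in the range of $P^\perp$ and $PP^\perp=P(I-P)=P-P^2=0$, we get $P\mathbf z_0=\mathbf 0$. Every inner product in the maximum is therefore zero, and $\Def(\mathbf z_0;\mathcal E)=1$. (Equivalently, $\mathbf z_0\perp\mathcal S\ni\mathbf e_i$ directly.) The set $\mathcal E$ must be nonempty for the maximum to be defined, which I take as implicit.

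For Demand, I will decompose $\mathbf v=P\mathbf v+P^\perp\mathbf v$ and use Eq.~\eqref{eq:demand}: $\Demand(\mathbf z_0)=\inner{\mathbf v}{\mathbf z_0}=\inner{P\mathbf v}{\mathbf z_0}+\inner{P^\perp\mathbf v}{\mathbf z_0}$. The first term vanishes because $\inner{P\mathbf v}{P^\perp\mathbf v}=\inner{\mathbf v}{PP^\perp\mathbf v}=0$, by self-adjointness of $P$. The second term equals $\norm{P^\perp\mathbf v}^2/\norm{P^\perp\mathbf v}=a$.

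Finally, by Eq.~\eqref{eq:gap}, $\GS(\mathbf z_0;\mathcal E)=a\cdot 1=a$. Since $\mathbf z_0$ is feasible, the maximum over the sphere is at least $a$. The maximum itself exists because the sphere is compact and $\GS$ is continuous: $\Demand$ is linear and $\Def$ is one minus a finite maximum of linear functions. If one prefers to avoid that point, the statement can be read as a supremum. No step here is a genuine obstacle; the only care needed is the orthogonality bookkeeping, namely $P^\perp$ being an orthogonal (self-adjoint, idempotent) projector.
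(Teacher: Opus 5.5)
Your proposal is correct and follows essentially the same route as the paper: $P^\perp\mathbf v$ is orthogonal to every $\mathbf e_i$, which gives Defensibility one; the identity $\inner{\mathbf v}{P^\perp\mathbf v}=\norm{P^\perp\mathbf v}^2$ gives Demand $a$; and feasibility of $\mathbf z_0$ yields the bound. Routing the Defensibility step through Lemma~\ref{lem:projection} and remarking on attainment of the maximum (which the paper proves in a separate proposition) are harmless additions, not a different argument.
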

\begin{proof}
For every occupied vector, $\inner{\mathbf e_i}{P^\perp\mathbf v}=0$ by orthogonality. Therefore the maximum occupied similarity to $\mathbf z_0$ is zero, yielding Defensibility one. Also,
\[
\inner{\mathbf v}{\mathbf z_0}
=\frac{\inner{\mathbf v}{P^\perp\mathbf v}}{a}
=\frac{\norm{P^\perp\mathbf v}^2}{a}=a.
\]
Multiplying gives $\GS(\mathbf z_0;\mathcal E)=a$. Since $\mathbf z_0$ is feasible, the maximum cannot be smaller.
\end{proof}

\begin{corollary}[Dimension of the orthogonal complement]
If $r=\dim\mathcal S<d$, then $\mathcal S^\perp$ has dimension $d-r$, and every unit vector in it has Defensibility one.
\end{corollary}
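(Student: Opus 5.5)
The corollary splits into a linear-algebra fact and a Defensibility computation, and I would prove them in that order. For the dimension claim I will use the orthogonal decomposition $\R^d=\mathcal S\oplus\mathcal S^\perp$. This is standard for any subspace of a finite-dimensional inner-product space and is already implicit in the projectors $P$ and $P^\perp=I-P$ introduced before Lemma~\ref{lem:projection}. Since $P$ and $P^\perp$ are complementary idempotents summing to $I$, their ranks add to $d$. The rank of $P$ is $r$, so $\dim\mathcal S^\perp=d-r$. The hypothesis $r<d$ then gives $d-r\geq1$, so $\mathcal S^\perp$ is nontrivial and contains unit vectors; the set of such vectors is the sphere $\mathcal S^\perp\cap\sphere$.

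For the Defensibility claim, take any unit $\mathbf z\in\mathcal S^\perp$. Then $P\mathbf z=0$, and Lemma~\ref{lem:projection} gives
\[
\Def(\mathbf z;\mathcal E)=1-\max_i\inner{\mathbf e_i}{P\mathbf z}=1-\max_i 0=1.
\]
One could equally argue directly that $\inner{\mathbf e_i}{\mathbf z}=0$ because each $\mathbf e_i$ lies in $\mathcal S$. This is the same orthogonality step used in the proof of Theorem~\ref{thm:unmet}, applied to an arbitrary vector of $\mathcal S^\perp$ rather than only to $P^\perp\mathbf v$. I would also note that this requires $\mathcal E$ to be nonempty, so that the maximum is over at least one term. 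That holds because $N\geq1$ is implicit in the setup.

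There is no real obstacle here. The only point needing care is that the statement is meaningful only when $\mathcal S^\perp$ actually contains unit vectors, and this is exactly what $r<d$ guarantees.
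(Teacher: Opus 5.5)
Your proof is correct and follows the paper's argument. The dimension count is the orthogonal decomposition $\R^d=\mathcal S\oplus\mathcal S^\perp$, and Defensibility one holds because every occupied inner product with a vector of $\mathcal S^\perp$ vanishes, so routing this through Lemma~\ref{lem:projection} only repackages the same orthogonality step.
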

\begin{proof}
The dimension statement is the orthogonal decomposition of $\R^d$. For a unit vector in $\mathcal S^\perp$, every occupied inner product is zero, giving the score directly.
\end{proof}

For $N=20$ and $d=1536$, the complement has dimension at least 1516. This dimension count holds for any collection of 20 vectors in this space; it is not itself evidence of semantic homogenization. Positive-demand existence further requires $P^\perp\mathbf v\neq0$, and linguistic realizability remains a separate question. Defensibility one is not its maximum possible value, since negative occupied similarities can give values above one.

\begin{proposition}[Attainment of the maximum]
For a finite, nonempty occupied set, GapScore attains a maximum on $\sphere$.
\end{proposition}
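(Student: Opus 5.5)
The plan is to apply the extreme value theorem: show that GapScore is continuous on $\sphere$ and that $\sphere$ is compact.

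\textbf{Compactness.} The unit sphere is closed and bounded in $\R^d$, since it is the preimage of $\{1\}$ under the continuous map $\mathbf z\mapsto\norm{\mathbf z}$ and lies in the closed unit ball. By Heine--Borel it is compact, and it is nonempty.

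\textbf{Continuity of the factors.} Demand is $\Demand(\mathbf z)=\inner{\mathbf v}{\mathbf z}$, a linear functional, so it is continuous. For Defensibility, each map $\mathbf z\mapsto\inner{\mathbf e}{\mathbf z}$ with $\mathbf e\in\mathcal O$ is linear and hence continuous. The occupied set $\mathcal O$ is finite and nonempty, so $\max_{\mathbf e\in\mathcal O}\inner{\mathbf e}{\mathbf z}$ is a pointwise maximum of finitely many continuous functions. It is therefore continuous; one can induct using $\max(a,b)=\tfrac12(a+b+|a-b|)$. Consequently $\Def(\mathbf z;\mathcal O)$ is continuous.

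\textbf{Continuity of the product and conclusion.} $\GS$ is a product of two continuous real functions, so it is continuous on $\sphere$. The extreme value theorem then gives a point $\mathbf z^\star\in\sphere$ with $\GS(\mathbf z^\star;\mathcal O)=\max_{\mathbf z\in\sphere}\GS(\mathbf z;\mathcal O)$.

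\textbf{Main obstacle.} The only delicate point is that the objective is nonsmooth at nearest-neighbour ties. This matters for the gradient in Eq.~\eqref{eq:gradient}, but not for attainment, since continuity suffices. I would also note explicitly where the hypotheses enter. Finiteness guarantees the maximum in Eq.~\eqref{eq:def} exists and is continuous without any closedness assumption. Nonemptiness ensures Defensibility is defined at all.
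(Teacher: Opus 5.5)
Your proof is correct and follows the same route as the paper: Demand is continuous because it is linear, Defensibility is continuous as one minus a finite maximum of linear functions, so the product is continuous and the extreme value theorem applies on the compact sphere. Your additional details (Heine--Borel, the $\max(a,b)$ identity, and noting where finiteness and nonemptiness are used) only fill in steps the paper leaves implicit.
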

\begin{proof}
Demand is linear and Defensibility is one minus the maximum of finitely many continuous functions. Their product is continuous. The sphere is compact, so the extreme-value theorem applies.
\end{proof}

\subsection{A One-Sided Perturbation of the Orthogonal Direction}
\label{app:perturbation}
\begin{proposition}[Directional improvement condition]
\label{prop:perturbation}
Assume $a=\norm{P^\perp\mathbf v}>0$ and $b=\norm{P\mathbf v}>0$. Put $\mathbf u=P\mathbf v/b$ and $m=\max_i\inner{\mathbf e_i}{\mathbf u}$. Along
$\mathbf z(\theta)=\cos\theta\,\mathbf z_0+\sin\theta\,\mathbf u$, the right derivative at zero is
\[
\left.\frac{d}{d\theta}\GS(\mathbf z(\theta);\mathcal E)\right|_{0^+}=b-am.
\]
In particular, $b>am$ is sufficient for a local improvement along this positive perturbation.
\end{proposition}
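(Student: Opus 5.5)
The plan is to write $\GS(\mathbf z(\theta);\mathcal E)=D(\theta)F(\theta)$ with $D(\theta)=\Demand(\mathbf z(\theta))$ and $F(\theta)=\Def(\mathbf z(\theta);\mathcal E)$. I would compute each factor explicitly near $\theta=0$ and apply the product rule for one-sided derivatives. First note that $\mathbf z_0\in\mathcal S^\perp$ and $\mathbf u\in\mathcal S$ are orthonormal, so $\mathbf z(\theta)$ is a unit vector for every $\theta$ and the path stays on $\sphere$.

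\textbf{Demand factor.} Decompose $\mathbf v=P^\perp\mathbf v+P\mathbf v=a\mathbf z_0+b\mathbf u$. Then
\[
D(\theta)=\inner{\mathbf v}{\mathbf z(\theta)}=a\cos\theta+b\sin\theta .
\]
This function is smooth, with $D(0)=a$ and $D'(0)=b$.

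\textbf{Defensibility factor.} By Lemma~\ref{lem:projection}, or directly from $\inner{\mathbf e_i}{\mathbf z_0}=0$, we have $\inner{\mathbf e_i}{\mathbf z(\theta)}=\sin\theta\,\inner{\mathbf e_i}{\mathbf u}$. For $\theta\in[0,\pi/2)$ we have $\sin\theta\ge0$, and multiplying by a nonnegative scalar commutes with the maximum. Hence
\[
F(\theta)=1-\sin\theta\,m .
\]
This is the one step needing care. For $\theta<0$ the maximum would become $\sin\theta\min_i\inner{\mathbf e_i}{\mathbf u}$, so $F$ has a kink at $0$. That kink is why the statement is one-sided. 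Restricted to $[0,\pi/2)$, $F$ is smooth, with $F(0)=1$ (consistent with Theorem~\ref{thm:unmet}) and $F'(0^+)=-m$.

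\textbf{Product rule.} On $[0,\pi/2)$ the product $DF$ is differentiable, and its right derivative at $0$ is
\[
D'(0)F(0)+D(0)F'(0^+)=b\cdot1+a\cdot(-m)=b-am .
\]
If $b>am$, this right derivative is positive. So for all sufficiently small $\theta>0$ we get $\GS(\mathbf z(\theta))>\GS(\mathbf z_0)=a$, which is a strict local improvement along the positive perturbation.

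I expect no real obstacle here. The only subtle point is the sign argument that lets $\sin\theta$ be pulled out of the maximum, and it should be stated explicitly.
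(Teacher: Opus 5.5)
Your proposal is correct and matches the paper's proof essentially step for step. Both use the unit-norm path from orthonormality of $\mathbf z_0$ and $\mathbf u$, Demand $a\cos\theta+b\sin\theta$, Defensibility $1-m\sin\theta$ once $\sin\theta\ge0$, and differentiation of the product at $0^+$. Your added remarks only spell out what the paper leaves implicit. One small caveat: your aside that $F$ has a kink at $0$ holds only when the values $\inner{\mathbf e_i}{\mathbf u}$ are not all equal, but nothing in the argument depends on it.
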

\begin{proof}
Orthogonality of $\mathbf z_0$ and $\mathbf u$ makes $\mathbf z(\theta)$ a unit vector. Its Demand is $a\cos\theta+b\sin\theta$. For $\theta\geq0$ sufficiently small, $\sin\theta\geq0$, so the maximum occupied inner product is $m\sin\theta$. Thus the product equals
$(a\cos\theta+b\sin\theta)(1-m\sin\theta)$ on that side. Differentiation at $0^+$ yields $b-am$.
\end{proof}

The statement is one-sided because the active maximum can change on reversing the perturbation. A nonpositive derivative along this path does not prove global tightness of Eq.~\eqref{eq:lowerbound}, nor optimality against all tangent directions.

\begin{table}[htbp]
\caption{Recorded numerical comparison for the geometric lower bound. The search column is the best observed score, not a certified global optimum. All five occupied matrices are reported to have rank 20, giving $d-r=1516$.}
\label{tab:geometry}
\centering\small
\begin{tabular}{@{}lrrrr@{}}
\toprule
Topic & Lower bound $a$ & Search score & Difference & $b=\norm{P\mathbf v}$\\
\midrule
Time metaphor & 0.6438 & 0.6492 & 0.0054 & 0.5841\\
Loneliness metaphor & 0.7474 & 0.7493 & 0.0019 & 0.6644\\
Redefine courage & 0.7156 & 0.7389 & 0.0233 & 0.6985\\
Memory as substance & 0.5455 & 0.6011 & 0.0556 & 0.8381\\
Hope via natural elements & 0.7731 & 0.7893 & 0.0162 & 0.6343\\
\bottomrule
\end{tabular}
\end{table}

The report records orthogonality residuals of order $10^{-16}$ for these calculations. The observed search values exceed the lower bounds by approximately 0.25--10.2\% relative to the bounds. These are empirical comparisons to returned solutions, not approximation ratios to the unknown optima. The supplied values of $b/a$ are 0.907, 0.889, 0.976, 1.536, and 0.820 in table order; the observed differences are not monotone in that ratio. The perturbation also depends on $m$, which the table does not report.

The orthogonal direction can be constructed using an orthonormal basis $U$ of the occupied span, with residual $\mathbf v-UU^\top\mathbf v$. It is a feasible reference point whenever this residual is nonzero. The same construction remains applicable after adding previously selected directions, provided the new residual is nonzero. The experiments report comparisons to this construction, but do not establish that it was used as an additional initializer in every main run.

\subsection{What the Product Objective Implies}
\label{app:product}
The product has two elementary properties on the positive region: a zero factor gives zero score, and multiplying either factor by a positive constant preserves the ranking of candidates. These facts motivate its use without claiming invariance to arbitrary translations of either factor.

\begin{proposition}[Conditional Nash interpretation]
Consider a compact, convex feasible utility set $U\subset\R^2$ with disagreement point $(0,0)$ and at least one feasible pair with both coordinates strictly positive. On the individually rational region, the Nash bargaining solution maximizes $u_1u_2$ under the standard Pareto, symmetry, positive affine covariance, and independence axioms \citep{nash1950bargaining}.
\end{proposition}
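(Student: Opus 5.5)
The plan is to reproduce the classical derivation of the Nash solution, specialized to disagreement point $(0,0)$. First, I establish existence and uniqueness of the maximizer. Let $U^+=\{u\in U: u_1\ge0,\,u_2\ge0\}$ be the individually rational region. It is compact and convex as the intersection of $U$ with a closed convex cone. The product $u_1u_2$ is continuous, so it attains a maximum $u^*$ on $U^+$. Because some feasible pair has both coordinates strictly positive, this maximum is positive, and hence $u^*_1,u^*_2>0$. Uniqueness follows because $\log u_1+\log u_2$ is strictly concave on the open positive quadrant. If two distinct maximizers existed, their midpoint would lie in $U^+$ by convexity and would have a strictly larger product, a contradiction.

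Next, I show that any solution $F$ satisfying the four axioms must select $u^*$. I apply the positive affine map $T(u)=(u_1/u^*_1,\,u_2/u^*_2)$, which fixes the disagreement point $(0,0)$ and sends $u^*$ to $(1,1)$. By positive affine covariance, $F(T(U))=T(F(U))$. The product maximizer is equivariant under this coordinatewise scaling, so $(1,1)$ maximizes $u_1u_2$ on $T(U)^+$. I then prove the supporting-line fact: $T(U)^+\subseteq H=\{u: u_1+u_2\le2\}$. Suppose instead some $w\in T(U)^+$ had $w_1+w_2>2$. Then for small $t>0$ the point $(1-t)(1,1)+tw$ would lie in $T(U)^+$ by convexity, and its product would have derivative $w_1+w_2-2>0$ at $t=0$, contradicting maximality.

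Now I consider the symmetric set $V=\{u\in H: u\ge0\}$, the triangle with vertices $(0,0)$, $(2,0)$, $(0,2)$. It is compact and convex, contains a strictly positive pair, and contains $T(U)^+$. Symmetry together with Pareto optimality forces $F(V)=(1,1)$, since $(1,1)$ is the only symmetric Pareto point of $V$. The point $(1,1)$ lies in $T(U)^+$, so independence of irrelevant alternatives gives $F(T(U)^+)=(1,1)$. Transforming back through $T$ yields $F=u^*$. Finally, I check the converse, namely that the product maximizer itself satisfies the four axioms. Pareto optimality holds because a Pareto improvement of $u^*$ within $U^+$ would strictly increase the product. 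Symmetry and positive affine covariance follow from the scaling equivariance already used. Independence holds because a maximizer over a larger set remains a maximizer over any subset that contains it.

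The step I expect to be the main obstacle is the passage from $U$ to its individually rational region. The axioms are stated for $U$, but the IIA step naturally produces $U^+$. I would handle this by fixing the convention, as the statement's phrase ``on the individually rational region'' suggests, that bargaining problems are restricted to $U^+$, which is again compact, convex and nondegenerate. I would also verify that the transformation $T$ preserves this restriction, which holds because $T$ has positive diagonal scaling and no translation. Since the statement attributes the result to \citet{nash1950bargaining}, the proof may ultimately cite that theorem, with the steps above showing exactly where compactness, convexity and the strictly positive pair are used. I would remark that nothing here links $(\Demand,\Def)$ to such a utility set; the proposition is interpretive only.
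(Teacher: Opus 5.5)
Your argument is correct, but it takes a different route from the paper. The paper's proof is purely a citation of Nash's two-player characterization. It notes only that the general product $(u_1-u_1^0)(u_2-u_2^0)$ reduces to $u_1u_2$ at $u^0=(0,0)$, and that the disagreement point must transform along with the utilities under positive affine maps. You instead rebuild the classical proof: existence and uniqueness of the maximizer via compactness and strict log-concavity, normalization of $u^*$ to $(1,1)$, the supporting line $u_1+u_2\le 2$, symmetry plus Pareto on a triangle, and then IIA. This buys a self-contained account of exactly where compactness, convexity, and the strictly positive pair are used. The cost is that you must settle the domain question yourself.

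The obstacle you flag there dissolves without any convention. Your supporting-line argument applies verbatim to any $w\in T(U)$, not just $w\in T(U)^+$: for small $t>0$ the point $(1-t)(1,1)+tw$ is strictly positive, so it lies in $T(U)^+$ and has product exceeding $1$. Hence all of $T(U)$ lies in $\{u_1+u_2\le 2\}$. Now replace the triangle by the symmetric compact convex set $\{u:\ u_1+u_2\le 2,\ u_1\ge -c,\ u_2\ge -c\}$, with $c$ large enough to contain the compact set $T(U)$. Its only symmetric Pareto point is still $(1,1)$, so IIA gives $F(T(U))=(1,1)$ directly for the full feasible set.

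Two small corrections to your converse direction:
\begin{itemize}
\item Symmetry of $u^*$ on a symmetric set follows from invariance of $u_1u_2$ under swapping coordinates together with uniqueness, not from scaling equivariance.
\item Covariance under maps with a translation part moves the disagreement point off $(0,0)$. There you need the general product $(u_1-d_1)(u_2-d_2)$, which scales by $\alpha_1\alpha_2$; this is exactly the remark the paper's proof records.
\end{itemize}
The characterization direction itself needs only your diagonal scaling $T$.
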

\begin{proof}
This is the two-utility Nash bargaining characterization with zero disagreement point. The general product is $(u_1-u_1^0)(u_2-u_2^0)$; setting $u^0=(0,0)$ gives the stated expression. Under positive affine transformations, the disagreement point must transform with the utilities.
\end{proof}

The achievable pairs $(\Demand(\mathbf z),\Def(\mathbf z;\mathcal E))$ for unit vectors are not shown to form such a convex feasible set. Convexifying the utility set can introduce mixtures that no individual direction realizes. Therefore the classical theorem supplies a conditional interpretation, not a uniqueness theorem for the deployed GapScore or a proof that an additive MMR objective is invalid.

\begin{proposition}[Ambient log-concavity]
Extend Demand and Defensibility using their inner-product formulas to $\R^d$. On
$\Omega=\{\mathbf z:\Demand(\mathbf z)>0,\ \Def(\mathbf z;\mathcal E)>0\}$,
$\log\GS(\mathbf z;\mathcal E)$ is concave.
\end{proposition}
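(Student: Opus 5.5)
On $\Omega$ we have $\log\GS(\mathbf z;\mathcal E)=\log\Demand(\mathbf z)+\log\Def(\mathbf z;\mathcal E)$. A sum of concave functions is concave, so the plan is to show that each summand is concave on $\Omega$.

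The first step is to check that $\Omega$ is convex, so that concavity on $\Omega$ is meaningful. The set $\{\mathbf z:\inner{\mathbf v}{\mathbf z}>0\}$ is an open half-space. The inequality $\Def(\mathbf z;\mathcal E)>0$ is equivalent to $\inner{\mathbf e_i}{\mathbf z}<1$ for every $i$, so that set is a finite intersection of open half-spaces. Hence $\Omega$ is an intersection of open half-spaces and is convex (and open).

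The second step handles the two factors. Demand $\mathbf z\mapsto\inner{\mathbf v}{\mathbf z}$ is linear, hence concave. Defensibility is $\Def(\mathbf z;\mathcal E)=1-\max_i\inner{\mathbf e_i}{\mathbf z}=\min_i\bigl(1-\inner{\mathbf e_i}{\mathbf z}\bigr)$. A pointwise minimum of affine functions is concave, and here it is positive on $\Omega$.

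The third step uses the composition rule: if $h$ is concave and positive on a convex set, then $\log h$ is concave there, because $\log$ is concave and nondecreasing on $(0,\infty)$. Applying this to $h=\Demand$ and $h=\Def(\cdot;\mathcal E)$ shows both logarithms are concave on $\Omega$, and summing them gives the claim.

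There is no real obstacle. The only point needing care is that $\Def$ is nonsmooth at nearest-neighbour ties, so I would argue through the composition rule rather than through Hessians. Equivalently, one can write $\log\Def=\min_i\log\bigl(1-\inner{\mathbf e_i}{\mathbf z}\bigr)$, valid because $\log$ is monotone; this is a minimum of concave functions and therefore concave. I would also remark that this is concavity in the ambient space. It does not give geodesic concavity of the restriction to $\sphere$, since the sphere is not a convex subset of $\R^d$.
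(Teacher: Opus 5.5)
Your proof is correct and follows the paper's argument essentially step for step. Both show that $\Omega$ is convex as an intersection of the two positive domains (which are half-spaces). Both note that Demand is affine and that Defensibility is concave as one minus a maximum of linear functions. Both then use the fact that the increasing concave logarithm preserves concavity of positive concave functions, and add the two logarithms. Your remark that this gives only ambient, not geodesic, concavity on the sphere matches the paper's own caveat.
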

\begin{proof}
Demand is affine. Defensibility is concave because it is one minus a pointwise maximum of linear functions. Their positive domains intersect in a convex set. Composing either positive concave function with the increasing concave logarithm preserves concavity \citep{boyd2004convex}; summing the two logarithms gives the result.
\end{proof}

The sphere constraint is nonconvex. Ambient log-concavity therefore does not establish geodesic concavity or initialization-independent global optimization on the sphere.

\subsection{Optimization: Smooth Guarantees and the Implemented Objective}
\label{app:optimization}
At a point with a unique active neighbor, Eq.~\eqref{eq:gradient} is the ordinary gradient. At a tie, an active branch gives a generalized derivative choice, consistent with nonsmooth analysis \citep{clarke1983optimization}. The fixed-step implementation remains a multistart heuristic for a nonsmooth objective, in the spirit of stochastic multi-start global optimization \citep{rinnooykan1987stochastic}, and is related to gradient-sampling approaches for nonsmooth optimization on manifolds \citep{hosseini2017riemannian}; differentiability almost everywhere does not supply the smoothness assumptions needed below.

\begin{theorem}[Conditional stationarity rate for a smooth objective]
\label{thm:smooth}
Let $f$ be differentiable on a compact manifold with a retraction $R$. Suppose its pullbacks obey, for the steps used, the ascent bound
\[
f(R_x(\eta\operatorname{grad}f(x)))\geq f(x)+\eta\norm{\operatorname{grad}f(x)}^2
-\frac{L\eta^2}{2}\norm{\operatorname{grad}f(x)}^2.
\]
For $\eta=1/L$, some iterate among the first $T$ satisfies
\[
\norm{\operatorname{grad}f(x_t)}^2\leq
\frac{2L(f_{\max}-f(x_0))}{T}.
\]
Thus $O(\varepsilon^{-2})$ iterations suffice for an $\varepsilon$-stationary point under these assumptions \citep{boumal2019global}.
\end{theorem}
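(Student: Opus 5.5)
The proof is the standard telescoping argument for Riemannian gradient ascent, used in the ascent direction. Write $x_{t+1}=R_{x_t}(\eta\operatorname{grad}f(x_t))$. With $\eta=1/L$ the assumed ascent bound becomes
\[
f(x_{t+1})\geq f(x_t)+\frac{1}{L}\norm{\operatorname{grad}f(x_t)}^2-\frac{1}{2L}\norm{\operatorname{grad}f(x_t)}^2
= f(x_t)+\frac{1}{2L}\norm{\operatorname{grad}f(x_t)}^2 .
\]
This is a per-step sufficient-increase inequality. It is the only place where the hypothesis is used, and we use it for exactly the steps the algorithm takes.

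Next we sum this inequality for $t=0,\ldots,T-1$. The function values telescope, giving
\[
\frac{1}{2L}\sum_{t=0}^{T-1}\norm{\operatorname{grad}f(x_t)}^2\leq f(x_T)-f(x_0)\leq f_{\max}-f(x_0).
\]
Here $f_{\max}=\sup f$ over the manifold. It is finite, and in fact attained, because $f$ is differentiable, hence continuous, on a compact manifold. This is the same extreme-value argument used for the attainment proposition earlier in the appendix. We then bound the minimum by the average:
\[
\min_{0\le t<T}\norm{\operatorname{grad}f(x_t)}^2\leq\frac{1}{T}\sum_{t=0}^{T-1}\norm{\operatorname{grad}f(x_t)}^2\leq\frac{2L(f_{\max}-f(x_0))}{T}.
\]
This is the claimed inequality.

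For the iteration count, set the right-hand side at most $\varepsilon^2$. This holds whenever $T\geq 2L(f_{\max}-f(x_0))/\varepsilon^2$, which is $O(\varepsilon^{-2})$, and some iterate then has $\norm{\operatorname{grad}f(x_t)}\le\varepsilon$.

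There is no substantive obstacle, because the hypothesis already supplies the usual Lipschitz-type descent lemma for the pullbacks. The point that needs care is the indexing. The telescoped sum involves gradients at $x_0,\ldots,x_{T-1}$, so "among the first $T$" must mean those iterates. We should also state explicitly that $f_{\max}<\infty$ by compactness. Finally, we will remark that the argument says nothing about GapScore itself, since its nonsmooth max term need not satisfy the assumed ascent bound. This is consistent with the paper's framing of the theorem as conditional.
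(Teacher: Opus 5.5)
Your proof is correct and follows the paper's argument: substitute $\eta=1/L$ to get a per-step increase of at least $\norm{\operatorname{grad}f(x_t)}^2/(2L)$, telescope over $t=0,\ldots,T-1$, bound the total increase by $f_{\max}-f(x_0)$, and use that the minimum is at most the average. Your added points, namely that $f_{\max}$ is finite by compactness and that ``the first $T$'' iterates means $x_0,\ldots,x_{T-1}$, are sound clarifications that the paper leaves implicit.
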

\begin{proof}
Substitution of $\eta=1/L$ gives an increase of at least $\norm{\operatorname{grad}f(x_t)}^2/(2L)$ per step. Summing over $t=0,\ldots,T-1$ and bounding the total increase by $f_{\max}-f(x_0)$ bounds the average squared gradient. The smallest value cannot exceed that average.
\end{proof}

This theorem neither guarantees a global optimum nor directly covers the exact nearest-neighbor maximum in GapScore. A smooth replacement of that maximum could permit a separate analysis, but smoothing was not documented as part of the reported experiments. No convergence rate is asserted here for the original fixed-step search.

\subsection{Coverage as a Separate Set Objective}
\label{app:coverage}
Let $V$ be a finite candidate set and let $a_{jz}=\max\{0,\inner{\mathbf u_j}{\mathbf z}\}$. Define a normalized nonnegative coverage surrogate
\[
F(S)=\sum_j f_j\max_{\mathbf z\in S}a_{jz},\qquad F(\varnothing)=0.
\]
This auxiliary surrogate clips negative similarities; it is distinct from the signed diagnostic in Eq.~\eqref{eq:setscore}.

\begin{proposition}[Submodularity and greedy coverage]
\label{prop:coverage}
$F$ is monotone and submodular. Exact greedy maximization of its marginal gains under $|S|\leq K$ obtains
$F(S_K)\geq(1-1/e)F(S^*)$, where $S^*$ is an optimal set of at most $K$ elements \citep{nemhauser1978analysis}. This auxiliary coverage view is related to max-sum diversification, but does not certify the implemented product objective \citep{borodin2012maxsum}.
\end{proposition}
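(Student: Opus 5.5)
The plan is to prove the two properties of $F$ directly and then invoke the classical greedy guarantee for monotone submodular functions \citep{nemhauser1978analysis}. First, $F$ is a nonnegative weighted sum, with weights $f_j\ge0$, of the single-query functions
\[
F_j(S)=\max_{\mathbf z\in S}a_{jz},\qquad F_j(\varnothing)=0.
\]
Monotonicity and submodularity are both preserved under nonnegative combinations. It therefore suffices to check them for each $F_j$, which is a facility-location (max-coverage) function. The clipping $a_{jz}\ge0$ is exactly what makes the convention $F_j(\varnothing)=0$ compatible with monotonicity. Without it, adding the first element could decrease the value below zero. This is why the statement uses the clipped surrogate rather than the signed diagnostic in Eq.~\eqref{eq:setscore}.

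\textbf{Monotonicity.} For $A\subseteq B$, a maximum over a larger set is no smaller. When $A=\varnothing$, we have $F_j(B)\ge0=F_j(A)$ because every $a_{jz}\ge0$.

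\textbf{Submodularity.} Fix $A\subseteq B\subseteq V$ and $x\in V\setminus B$. Write $\alpha=F_j(A)\le\beta=F_j(B)$. The marginal gains are
\[
F_j(A\cup\{x\})-F_j(A)=\max\{0,\,a_{jx}-\alpha\},\qquad F_j(B\cup\{x\})-F_j(B)=\max\{0,\,a_{jx}-\beta\}.
\]
These identities also hold when the set is empty, taking $\alpha=0$ or $\beta=0$, again by nonnegativity. The map $t\mapsto\max\{0,a_{jx}-t\}$ is nonincreasing, so the gain at $A$ is at least the gain at $B$. That is the diminishing-returns inequality. Summing over $j$ with weights $f_j$ gives the inequality for $F$.

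For the approximation bound, I would cite the Nemhauser--Wolsey--Fisher theorem. It applies because $F$ is normalized, monotone, and submodular on the finite ground set $V$ under a cardinality constraint, so exact greedy selection satisfies $F(S_K)\ge(1-1/e)F(S^*)$. If a self-contained argument is wanted, I would sketch the standard one. Let $S_i$ be the greedy set after $i$ steps. Monotonicity and submodularity give
\[
F(S^*)\le F(S_i\cup S^*)\le F(S_i)+K\,[F(S_{i+1})-F(S_i)].
\]
Hence the gap $F(S^*)-F(S_i)$ shrinks by a factor $(1-1/K)$ at each step. After $K$ steps this gives a factor at most $(1-1/K)^K\le e^{-1}$.

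The main subtlety, rather than a real obstacle, is the empty-set and clipping bookkeeping. The final sentence of the statement (no certification of the implemented product objective) is interpretive and needs no proof beyond noting that the guarantee concerns $F$ alone.
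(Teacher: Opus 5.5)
Your proof is correct and follows essentially the same route as the paper. The paper likewise establishes per-query diminishing returns through the gain formula $\max\{0,a_{jx}-\max_{y\in A}a_{jy}\}$, uses closure under nonnegative weighted sums, and closes with the greedy recurrence $F(S^*)-F(S_{i+1})\le(1-1/K)\,[F(S^*)-F(S_i)]$; your explicit treatment of the empty set via clipping is extra bookkeeping that the paper's proof leaves implicit.
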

\begin{proof}
For $A\subseteq B$, the current maximum for each query cannot decrease. The gain from adding $z$ is $\max\{0,a_{jz}-\max_{y\in A}a_{jy}\}$, which cannot increase when $A$ is replaced by $B$. Nonnegative weighted sums preserve this diminishing-returns property and monotonicity. By submodularity, the sum of marginal gains of elements in $S^*\setminus S_t$ is at least $F(S^*)-F(S_t)$. Since there are at most $K$ such elements, greedy gains at least $[F(S^*)-F(S_t)]/K$. Iterating this recurrence gives
$F(S_K)\geq[1-(1-1/K)^K]F(S^*)\geq(1-1/e)F(S^*)$.
\end{proof}

The implemented sequential GapScore search does not maximize marginal gains of this $F$. Moreover, multiplying coverage by average Defensibility need not preserve monotonicity or submodularity. The theorem therefore provides no approximation factor for either the actual sequential search or SetGapScore. Repeating a direction does not increase a maximum-based coverage term, but neither does repeating identical values increase their arithmetic mean. Coverage and a mean answer different questions; their relative numerical values need not have a universal ordering.

\subsection{A Conditional Connection to Source Selection}
\label{app:visibility}
\begin{proposition}[Redundancy and a fixed marginal-relevance score]
Suppose a source scorer uses
$\operatorname{MR}(s\mid E)=\lambda\operatorname{Rel}(s,q)-(1-\lambda)\max_{e\in E}\operatorname{Sim}(s,e)$,
with fixed $E$, fixed relevance, and $0\leq\lambda<1$. Its score is strictly increasing in $1-\max_{e\in E}\operatorname{Sim}(s,e)$.
\end{proposition}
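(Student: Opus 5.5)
The plan is to rewrite the score as an affine function of the quantity $D(s)=1-\max_{e\in E}\operatorname{Sim}(s,e)$, in the spirit of Defensibility in Eq.~\eqref{eq:def}, and read off monotonicity from the sign of the slope. With $E$ fixed, $\max_{e\in E}\operatorname{Sim}(s,e)=1-D(s)$, so
\[
\operatorname{MR}(s\mid E)=\lambda\operatorname{Rel}(s,q)-(1-\lambda)\bigl(1-D(s)\bigr)
=\bigl[\lambda\operatorname{Rel}(s,q)-(1-\lambda)\bigr]+(1-\lambda)D(s).
\]
The bracketed term is fixed by hypothesis, since relevance and $\lambda$ are held constant. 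The score is therefore an affine function of $D(s)$ with slope $1-\lambda$.

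The only substantive step is the sign of that slope. The hypothesis $0\le\lambda<1$ gives $1-\lambda>0$, so the map $D\mapsto c+(1-\lambda)D$ is strictly increasing. Concretely, for two sources $s,s'$ with the same fixed relevance, $D(s)>D(s')$ implies $\operatorname{MR}(s\mid E)-\operatorname{MR}(s'\mid E)=(1-\lambda)(D(s)-D(s'))>0$.

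I will also make two remarks. First, $\lambda=1$ is excluded precisely because the slope then vanishes and the score becomes constant in $D$. Second, the statement presupposes that $E$ is nonempty, so that the maximum is defined; this mirrors the finite nonempty occupied set used for Defensibility. There is no real obstacle here. The main care is in stating clearly that ``fixed relevance'' means $\operatorname{Rel}(s,q)$ is held constant across the sources being compared. Without that, $\operatorname{MR}$ is not a function of $D$ alone.
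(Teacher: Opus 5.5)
Your proof is correct and is the paper's argument: the paper also rewrites the score as $\lambda\operatorname{Rel}(s,q)-(1-\lambda)+(1-\lambda)\Def(s;E)$ and reads off the positive slope $1-\lambda$. Your comparison of two sources with equal relevance is simply the finite-difference form of the paper's derivative remark.
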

\begin{proof}
Writing the score as $\lambda\operatorname{Rel}(s,q)-(1-\lambda)+(1-\lambda)\Def(s;E)$ shows that its derivative with respect to Defensibility is $1-\lambda>0$.
\end{proof}

A nondecreasing inclusion probability would additionally require an inclusion rule monotone in that score, with the other candidates fixed. The GPT citation simulation is not established to satisfy these assumptions. In particular, this observation does not prove that higher GapScore must cause higher PAWC; the GEO ablation measures those quantities separately.

\section{Implementation Details}
\label{app:implementation}
\subsection{Shared Search and Vocabulary}
\begin{table}[htbp]
\caption{Settings used in the experimental pipeline.}
\label{tab:settings}
\centering\small
\begin{tabularx}{\linewidth}{@{}lX@{}}
\toprule
Component & Setting\\
\midrule
Encoder & \texttt{text-embedding-3-small}; 1,536 dimensions; L2 normalization\\
Concept decoder & Qwen2.5-3B-Instruct-Q6\_K; local GGUF on Apple M2\\
Default filter / generator & GPT-4o-mini\\
Search starts & Demand direction, demand plus noise, and random direction\\
Search budget & 15 restarts per strategy; 100 steps; step size 0.05\\
Answering / GEO directions & $K=5$\\
Synthetic-data directions & $K=1$ for final augmentation comparisons; $K=3$ for set diagnostics\\
Vocabulary & 13,409 valid words, screened to 10,000; embeddings cached in batches of 500\\
Concept shortlist & 20 unused nearest words per candidate batch\\
Accepted concepts & Three per direction for answering / GEO; up to ten filtering rounds\\
Avoidance words & Twelve frequent words closest to the occupied-content centroid\\
Novelty threshold & Maximum similarity to original occupied responses below 0.7\\
Within-output threshold & Maximum pairwise similarity of generated sentences below 0.8\\
Synthetic-data target & Eight accepted examples per class per generation; at most four generation rounds\\
\bottomrule
\end{tabularx}
\end{table}

The vocabulary starts from Qwen2.5-3B-Instruct \citep{qwen2024report} tokenizer entries restricted to alphabetic English words of length 3--15. The implementation uses the macOS dictionary at \texttt{/usr/share/dict/words} as a whitelist (234,454 dictionary entries). The 13,409 retained words are embedded once. The task-dependent screening score is
\begin{equation}
\operatorname{VocabScore}(w)=0.6\max_j\inner{E(w)}{\mathbf u_j}
+0.4\max_i\inner{E(w)}{\mathbf e_i}.
\label{eq:vocab}
\end{equation}
The top 10,000 words are retained. The positive occupied-content term restricts the vocabulary toward the broad domain; novelty is imposed by direction search and subsequent selection. Thus the vocabulary screen and GapScore have different purposes. The coefficients are fixed heuristics in this implementation, not learned ranking weights, though the underlying idea of weighting a candidate by its similarity to occupied content echoes term-frequency-based relevance scoring in classical information retrieval \citep{robertson1994okapi}.

For a direction $\mathbf z_k$, candidates are ranked by $\inner{E(w)}{\mathbf z_k}$. Accepted words enter a global used-word set; rejected words can be reconsidered for other directions. The query and candidates, rather than the numerical vector, are passed to Qwen. Its decode call uses temperature zero and a 40-token output limit.

\subsection{Avoidance Instructions and Output Diagnostics}
When the occupied embeddings have nonzero sum, their normalized centroid is
\[
\widehat{\mathbf c}=\frac{\sum_i\mathbf e_i}{\norm{\sum_i\mathbf e_i}}.
\]
The 12 frequent words nearest to this centroid form the forbidden-word prompt list. GPT-4o-mini \citep{openai2024gpt4o} also extracts common sentence patterns from the occupied responses. The generator receives the query, a direction's concept words, and both avoidance instructions. These are prompt-level constraints rather than enforced token-level decoding constraints.

In addition to Novelty, a mean-reference statistic can be reported:
\begin{definition}[CrossCollision]
For a generated text $g$ and original occupied set $\mathcal C$,
\[
\operatorname{CrossCollision}(g,\mathcal C)=\frac{1}{N}\sum_{i=1}^N\inner{E(g)}{E(c_i)}.
\]
\end{definition}
This measures mean similarity, not a distance. It differs from the nearest-neighbor similarity used in Novelty and from pairwise Collision within a set. Both diagnostics remain embedding-distance proxies rather than judged novelty scores: they differ from LLM-judged novelty scoring, such as blind peer-review-style feedback on creative writing \citep{llmreview2025novelty}, and from bibliometric novelty measures proposed for research writing in the post-ChatGPT era \citep{safari2025novelty}.

\subsection{GEO Scoring and Synthetic-Data Filtering}
The GEO scorer takes the first four competitor claims in the supplied list and appends the candidate at source position five. It does not retrieve pages from a deployed engine. Equation~\eqref{eq:pawc} credits all words in a sentence when it cites [5], even if it also cites another source. There is no division by the number of citations within that sentence and no subsequent normalization across source scores. These implementation details distinguish it from other position-adjusted impression measures; the reported values must be interpreted with this exact formula.

For synthetic data, concept decoding uses an expanded pool of eight words, reranked by absolute VADER compound score to retain five. This reranking is shared by random, Demand-only, Defensibility-only, and GapScore variants. The generator receives the target label separately; the final polarity check uses the sign and the thresholds described in Section~\ref{sec:adapt}. Data Accumulation does not use this same quality filter in this comparison. Equal final sample counts therefore control training-set size, but do not isolate direction choice from filtering or equalize generation cost.

Two earlier fidelity checks were tried first: GPT-4o-mini self-evaluation returned 100\% fidelity for all six methods, and a seed-trained TF-IDF classifier produced unstable rankings with only ten examples per class. The adopted VADER check avoids fitting a fidelity classifier on that small seed. It is still a lexicon-based proxy and can misread contextual sentiment, especially irony or mixed polarity.

\section{Additional Open-Ended Generation Results}
\label{app:llm_results}
\subsection{Query Groups and Topic Diagnostics}
Table~\ref{tab:querygroups} gives the per-group breakdown of the 90 open-ended queries used in \S\ref{sec:experiments}, grouped by question type.

\begin{table}[htbp]
\caption{Query-group breakdown for the 90-query open-ended generation study.}
\label{tab:querygroups}
\centering\small
\begin{tabular}{@{}lrrr@{}}
\toprule
Group & Queries & Baseline Collision & Mean reduction\\
\midrule
Hypothetical & 22 & 0.79 & 11.3\%\\
Ideal & 7 & 0.78 & 11.8\%\\
Metaphor & 1 & 0.92 & 17.0\%\\
Explain & 12 & 0.77 & 10.3\%\\
Describe & 20 & 0.78 & 10.1\%\\
Imagine & 10 & 0.77 & 9.9\%\\
Philosophical & 6 & 0.76 & 9.9\%\\
Write short & 12 & 0.76 & 8.9\%\\
\midrule
Reported aggregate & 90 & 0.777 & 10.8\%\\
\bottomrule
\end{tabular}
\end{table}

The 90-query summary additionally reports mean final Collision 0.6915, mean GapScore 0.2012, and a mean best-direction/competitor statistic of 0.5137. It does not specify enough detail to identify the latter statistic's exact aggregation. We therefore do not rename it as Demand or use it to derive a new result. Similarly, the five-topic table reports auxiliary direction/competitor values of 0.4514, 0.4488, 0.5402, 0.3779, and 0.3638; the corresponding differences from baseline Collision are 0.2848, 0.2960, 0.2644, 0.4383, and 0.4540. These columns are not used as independently defined evaluation metrics in the main text.

\begin{table}[htbp]
\caption{Additional five-topic diagnostics. Topic-level GapScore and the geometric search scores in Table~\ref{tab:geometry} come from different tables. Vendi values describe the baseline response set; baseline Collision is given in Table~\ref{tab:llm_main}.}
\label{tab:topic_diagnostics}
\centering\small
\begin{tabular}{@{}lrrr@{}}
\toprule
Topic & Reported GapScore & Baseline Vendi & Vendi/20\\
\midrule
Time metaphor & 0.165 & 2.91 & 14.5\%\\
Loneliness metaphor & 0.196 & 3.16 & 15.8\%\\
Redefine courage & 0.221 & 2.58 & 12.9\%\\
Memory as substance & 0.220 & 2.36 & 11.8\%\\
Hope via natural elements & 0.229 & 2.12 & 10.6\%\\
\midrule
Reported mean & 0.206 & 2.63 & 13.1\%\\
\bottomrule
\end{tabular}
\end{table}

The Vendi diagnostic uses 20 responses per topic and reports a mean effective diversity of 2.63 under the chosen cosine kernel. This is an effective spectral count, not an estimate of a discrete number of human-interpretable clusters. Collision values are taken from Table~\ref{tab:llm_main}; a redundant Collision column from this Vendi diagnostic is omitted here.

\subsection{Varying Content Sources and Generation Models}
\label{app:multimodel}
The mixed-set study uses three topics and four models, with five responses per model. Table~\ref{tab:crossmodel} gives the supplementary pairwise cross-model similarities, and Table~\ref{tab:mixed} gives the before/after Collision comparison for the pooled occupied set.

\begin{table}[htbp]
\caption{Between-model mean similarity in the mixed-source study. Claude denotes Claude-Sonnet-4.5 \citep{anthropic2024claude}; Gemini denotes Gemini-2.0-Flash \citep{google2024gemini}.}
\label{tab:crossmodel}
\centering\small
\begin{tabular}{@{}lr@{}}
\toprule
Model pair & Mean similarity\\
\midrule
GPT-4o-mini / Claude & 0.660\\
GPT-4o-mini / Gemini & 0.613\\
GPT-4o-mini / Qwen2.5-3B & 0.576\\
Claude / Gemini & 0.629\\
Claude / Qwen2.5-3B & 0.578\\
Gemini / Qwen2.5-3B & 0.554\\
\midrule
Reported mean & 0.602\\
\bottomrule
\end{tabular}
\end{table}

\begin{table}[htbp]
\caption{Mixed-source Collision before and after Gap Detection, for the pooled four-model occupied set.}
\label{tab:mixed}
\centering\small
\begin{tabular}{@{}lrrr@{}}
\toprule
Evaluation & Before & After & Reported reduction\\
\midrule
Mixed occupied set, $N=20$ & 0.602 & 0.561 & 7.4\%\\
\bottomrule
\end{tabular}
\end{table}

For the separate-source study (Table~\ref{tab:separate}), each model independently supplies 20 occupied responses per topic and GPT-4o-mini remains the realization model. In contrast, the generator study fixes 20 GPT-4o-mini occupied responses and varies only the realization model. These two experiments test different dependencies and should not be pooled.

\begin{table}[htbp]
\caption{Separate-source study, averaged over three topics. The realization model is fixed to GPT-4o-mini; novelty pass is the recorded Check-1 statistic.}
\label{tab:separate}
\centering\small
\begin{tabular}{@{}lrrrrr@{}}
\toprule
Occupied-set source & Before & After & Reduction & Novelty pass & GapScore\\
\midrule
GPT-4o-mini & 0.6860 & 0.6098 & 11.2\% & 86.7\% & 0.1609\\
Claude-Sonnet-4.5 & 0.7014 & 0.6126 & 12.7\% & 100.0\% & 0.1860\\
Gemini-2.0-Flash & 0.5709 & 0.5189 & 8.8\% & 100.0\% & 0.1791\\
Qwen2.5-3B & 0.6028 & 0.5475 & 8.8\% & 100.0\% & 0.1627\\
\midrule
Reported mean & 0.6403 & 0.5722 & 10.4\% & 96.7\% & 0.1722\\
\bottomrule
\end{tabular}
\end{table}

\begin{table}[htbp]
\caption{Realization-model comparison, averaged over three topics with the occupied set fixed. GapScore differences across realization models reflect the sentences produced by each generator, not changes to the search objective.}
\label{tab:generators}
\centering\small
\begin{tabular}{@{}lrrrr@{}}
\toprule
Realization model & Collision reduction & GapScore & Novelty & Novelty pass\\
\midrule
GPT-4o-mini & 9.0\% & 0.1500 & 0.3684 & 80.0\%\\
GPT-4o & 11.6\% & 0.1579 & 0.4314 & 100.0\%\\
Claude-Sonnet-4.5 & 14.8\% & 0.1632 & 0.4938 & 100.0\%\\
\bottomrule
\end{tabular}
\end{table}

\subsection{Qualitative Examples}
Table~\ref{tab:examples} reproduces the available sentence excerpts without completing the missing text. These examples show how concept words are realized; they are not human evaluations of originality. Some decoded words, such as \emph{brave} for courage, remain close to the query semantics despite the answer filter, illustrating its imperfect selectivity. This lexical-substitution realization is narrower than style-transfer approaches that rewrite a passage's surface form while holding its content fixed \citep{shen2017text}.

\begin{table}[htbp]
\caption{Concept words and generated excerpts for the five topics. Excerpts are truncated with ellipses for space.}
\label{tab:examples}
\centering\small
\begin{tabularx}{\linewidth}{@{}>{\raggedright\arraybackslash}p{2.2cm}>{\raggedright\arraybackslash}p{3.0cm}>{\raggedright\arraybackslash}X@{}}
\toprule
Topic & Concept words & Generated excerpt\\
\midrule
Time metaphor & poetic, nighttime, tempo & ``The night hums a quiet tempo, its stars ticking softly...''\\
Loneliness metaphor & loner, soliloquy, joylessness & ``In the quiet hours of twilight, a solitary tree stands...''\\
Redefine courage & fearlessness, brave, fortitude & ``Grit is the lighthouse that stands tall amid the storm...''\\
Memory as substance & myelin, tissue, formalin & ``Envision a soft, pliable tissue, reminiscent of myelin's...''\\
Hope via natural elements & sunlight, warmth, naissance & ``A warm breeze dances through the treetops, whispering...''\\
\bottomrule
\end{tabularx}
\end{table}

\section{Additional GEO Results}
\label{app:geo_results}
This market-positioning framing is in the same spirit as identifying an uncontested market space \citep{kim2005blueocean}, though the simulation below does not verify real-world market claims. Table~\ref{tab:geo_factors} gives the concept words and individual factors underlying the selected market results. Table~\ref{tab:water} gives all five water-filter directions. The rows report values directly rather than recomputing each product from the rounded Demand and Defensibility figures shown.

\begin{table}[htbp]
\caption{Selected market positions and direction factors. These are candidate concepts, not verified product specifications.}
\label{tab:geo_factors}
\centering\small
\begin{tabularx}{\linewidth}{@{}lXrr@{}}
\toprule
Market & Concept words & Demand & Defensibility\\
\midrule
Water filters & campground, wilderness, hike & 0.446 & 0.922\\
Protein powder & nutritional, ketogenic, peptide & 0.383 & 1.249\\
Sports water bottles & insulated, lightweight, waterproof & 0.509 & 0.782\\
Yoga mats & rubber, foam, handmade & 0.763 & 0.528\\
\midrule
Reported mean & & 0.525 & 0.870\\
\bottomrule
\end{tabularx}
\end{table}

\begin{table}[htbp]
\caption{All five water-filter directions. GS denotes the reported direction GapScore. The highest PAWC and the highest GS $\times$ PAWC select different directions.}
\label{tab:water}
\centering\small
\begin{tabularx}{\linewidth}{@{}lXrrrrr@{}}
\toprule
Direction & Concept words & Demand & Def. & GS & PAWC & Product\\
\midrule
1 & campground, wilderness, hike & 0.446 & 0.922 & 0.410 & 0.518 & 0.213\\
2 & portable, filtration, faucet & 0.772 & 0.396 & 0.306 & 0.410 & 0.125\\
3 & filter, purification, filtering & 0.579 & 0.427 & 0.247 & 0.471 & 0.116\\
4 & screened, cleansing, reservoir & 0.730 & 0.185 & 0.135 & 0.544 & 0.073\\
5 & waterproof, tank, aquatic & 0.755 & 0.355 & 0.268 & 0.484 & 0.130\\
\bottomrule
\end{tabularx}
\end{table}

The per-market best-direction results and the four-method ablation in Table~\ref{tab:geo} are separate summaries. In particular, the selected-market mean GS of 0.423 should not be substituted for the ablation mean of 0.307. This simulation does not include a held-out citation check after candidate selection, source-order randomization, or repeated-run intervals. It also has only three surviving yoga-mat brands and twelve claims, limiting the coverage of that market's occupied set.

\section{Additional Synthetic-Data Results}
\label{app:synth_results}
\subsection{Separate Multi-Generator Runs}
Each generator in this study supplies both the undirected and directed texts; the embedding, Qwen decoding, GPT plausibility screen, and VADER check remain fixed. These runs use $K=1$ and five generations. Table~\ref{tab:synth_models} reports the supplementary differences from Data Accumulation across the four generators, including Gemini-2.0-Flash.

\begin{table}[htbp]
\caption{Separate four-generator synthetic-data runs. Differences are GapScore minus Data Accumulation. Negative Collision and positive Vendi differences are favorable. No accuracy-by-model values were supplied.}
\label{tab:synth_models}
\centering\small
\begin{tabular}{@{}lrr@{}}
\toprule
Generator & $\Delta$Collision $\downarrow$ & $\Delta$Vendi $\uparrow$\\
\midrule
GPT-4o-mini & $-0.019$ & $+1.43$\\
Claude-Sonnet-4.5 & $-0.023$ & $+3.77$\\
Gemini-2.0-Flash & $-0.028$ & $+8.46$\\
Qwen2.5-3B & $-0.024$ & $-0.74$\\
\bottomrule
\end{tabular}
\end{table}

The GPT-4o-mini values in Table~\ref{tab:synth_models} come from this separate four-generator run rather than the main-text budget-matched comparison ($-0.026$ Collision, $+2.04$ Vendi), since the two experiments use different sampling procedures and should not be pooled. For Qwen, the reported accumulation Vendi is 29.78, making the deficit roughly 2.5\%. Without repeated runs or intervals, it cannot be dismissed as noise or attributed to model size. The generator-level spread in this table is consistent with broader evidence that recursive synthetic-data training can degrade generative distributions \citep{shumailov2023curse} and erode diversity within a model's own outputs \citep{guo2023curious}, though neither prior result was measured on this pipeline.

\subsection{Direction-Set Diagnostics and Fidelity}
The $K=3$ experiment studies the directions themselves. It is distinct from the $K=1$ end-to-end comparison. The recorded SetGapScore values and available mean-direction diagnostic values are shown in Table~\ref{tab:setdiagnostic}.

\begin{table}[htbp]
\caption{Synthetic-data direction-set diagnostics ($K=3$). A dash indicates a diagnostic not computed for that method. The two columns evaluate different aggregations, not repeated measurements of the same metric.}
\label{tab:setdiagnostic}
\centering\small
\begin{tabular}{@{}lrr@{}}
\toprule
Method & SetGapScore & Mean direction GapScore (diagnostic)\\
\midrule
Random direction & 0.023 & ---\\
Demand only & 0.199 & 0.199\\
Defensibility only & $-0.029$ & $-0.262$\\
GapScore & 0.276 & 0.192\\
\bottomrule
\end{tabular}
\end{table}

GapScore's set score is approximately 44\% higher than its mean-direction score, reflecting the different aggregation. This is not a 44\% improvement in generated-data quality. Demand-only yields identical directions, so its coverage maximum and per-direction demand coincide. Defensibility-only also gives diagnostic values of Collision 0.259 and Vendi 32.10, showing that strong final-sample diversity can coexist with a negative direction score; these do not justify replacing downstream evaluation by SetGapScore.

A qualitative comparison of $K=1$ and $K=3$ for Gemini-2.0-Flash shows that spreading the generation quota across three directions improved Vendi relative to accumulation but worsened Collision and accuracy, whereas the one-direction variant improved all three. A full $K$-sensitivity curve is not reported here. Retained-sample matching targets eight examples per label per generation, with at most four retry rounds.

\section{Available Prompt Templates}
\label{app:prompts}
The following templates are given as ordinary text, with braced names denoting runtime substitutions. The selection and filtering prompts are given verbatim; the final answer/synthetic-sample generation prompt is described qualitatively in \S\ref{sec:method} rather than reproduced verbatim.

\subsection{Qwen Concept Selection}
Query: `\texttt{\{query\}}'. Candidate words: \texttt{\{candidates\}}.

Select up to \texttt{\{top\_n\}} words from the candidates that would make good ANSWERS to this query---concrete, creative, unexpected words. Do NOT select words that are synonyms of the key concepts in the query. Return only the selected words, comma-separated, best first.

The documented decode settings are temperature 0.0 and \texttt{max\_tokens=40}. The numerical direction is used to retrieve the candidates, not passed directly to the model.

\subsection{Answer Plausibility Filter}
Query: `\texttt{\{query\}}'. Words: \texttt{\{words\}}.

Which of these words could be answers to the query question? Think about what the query is asking for, then decide if synonyms of the query's key concepts should be excluded as answers. Return only those words, comma-separated. Maximum \texttt{\{top\_n\}} words.

\subsection{GEO Plausibility Filter}
Market: \texttt{\{market\_desc\}}. Query: `\texttt{\{market\_query\}}'. Words: \texttt{\{words\}}.

Which of these words could be answers to the query question? Think about what the query is asking for, then decide if synonyms of the query's key concepts should be excluded as answers. Return only those words, comma-separated. Maximum \texttt{\{top\_n\}} words.

\subsection{Citation Simulation}
Write an accurate and concise answer for the given user question, using ONLY the provided summarized web search results. Every sentence must be immediately followed by an in-line citation using the format [1][2][3].

Question: \texttt{\{market\_query\}}.

Search Results: \texttt{\{source\_text\}}.

The source text enumerates four competitor claims followed by the candidate claim. The response is split into sentences, and sentences containing [5] contribute their position-weighted word counts to Eq.~\eqref{eq:pawc}. The scorer uses zero-based sentence indices.

\Needspace{8\baselineskip}
\subsection{Task-Label Plausibility Filter}
Task: \texttt{\{task\_desc\}}. Label space: \texttt{\{label\_names\}}. Words: \texttt{\{words\}}.

Which of these words correspond to a valid instance of this classification task? Exclude words that are off-topic or do not correspond to any class. Return only those words, comma-separated. Maximum \texttt{\{top\_n\}} words.

\subsection{Text-Generation Instructions}
For answering, the described prompt combines the query, three concept words, the forbidden-word list, and recurring sentence patterns, and requests one new sentence per direction. For GEO, it requests a product-claim sentence using the accepted concepts without duplicating competitor claims. For synthetic data, it requests labeled examples covering the accepted concepts, followed by the polarity check. These descriptions establish the conditioning variables but do not substitute for exact generation prompts, decoding settings, or raw completions.


\clearpage
\begin{thebibliography}{99}

\bibitem[Absil et~al., 2008]{absil2008optimization}
P.-A. Absil, R. Mahony, and R. Sepulchre.
\textit{Optimization Algorithms on Matrix Manifolds}.
Princeton University Press, 2008.
\bibitem[Aggarwal et~al., 2024]{aggarwal2024geo}
Pranjal Aggarwal, Vishvak Murahari, Tanmay Rajpurohit, Ashwin Kalyan, Karthik Narasimhan,
and Ameet Deshpande.
\textit{GEO: Generative Engine Optimization}.
In \textit{Proceedings of the 30th ACM SIGKDD Conference on Knowledge Discovery and Data
Mining}, 2024. arXiv:2311.09735.
\bibitem[Alemohammad et~al., 2024]{alemohammad2024mad}
Sina Alemohammad, Josue Casco-Rodriguez, Lorenzo Luzi, Ahmed Imtiaz Humayun, Hossein
Babaei, Daniel LeJeune, Ali Siahkoohi, and Richard G. Baraniuk.
\textit{Self-Consuming Generative Models Go MAD}.
\textit{ICLR 2024}. arXiv:2307.01850.
\bibitem[Anthropic, 2025]{anthropic2024claude}
Anthropic.
\textit{System Card: Claude Sonnet 4.5}.
\url{https://www.anthropic.com/claude-sonnet-4-5-system-card}, 2025.
\bibitem[Borodin et~al., 2012]{borodin2012maxsum}
Allan Borodin, Hyun Chul Lee, and Yuli Ye.
\textit{Max-Sum Diversification, Monotone Submodular Functions and Dynamic Updates}.
In \textit{Proceedings of the 31st ACM SIGMOD-SIGACT-SIGAI Symposium on Principles of
Database Systems (PODS)}, pp.\ 155--166, 2012.
\bibitem[Boumal et~al., 2019]{boumal2019global}
Nicolas Boumal, P.-A. Absil, and Coralia Cartis.
\textit{Global Rates of Convergence for Nonconvex Optimization on Manifolds}.
\textit{IMA Journal of Numerical Analysis}, 39(1):1--33, 2019.
\bibitem[Boyd \& Vandenberghe, 2004]{boyd2004convex}
Stephen Boyd and Lieven Vandenberghe.
\textit{Convex Optimization}.
Cambridge University Press, 2004.
\bibitem[Carbonell \& Goldstein, 1998]{carbonell1998mmr}
Jaime Carbonell and Jade Goldstein.
\textit{The Use of MMR, Diversity-Based Reranking for Reordering Documents and Producing
Summaries}.
In \textit{Proceedings of the 21st Annual International ACM SIGIR Conference on Research
and Development in Information Retrieval}, pp.\ 335--336, 1998.
\bibitem[Clarke, 1983]{clarke1983optimization}
Frank H. Clarke.
\textit{Optimization and Nonsmooth Analysis}.
Wiley-Interscience, 1983.
\bibitem[Friedman \& Dieng, 2023]{friedman2023vendi}
Dan Friedman and Adji Bousso Dieng.
\textit{The Vendi Score: A Diversity Evaluation Metric for Machine Learning}.
\textit{Transactions on Machine Learning Research}, 2023. arXiv:2210.02410.
\bibitem[Gerstgrasser et~al., 2024]{gerstgrasser2024accumulating}
Matthias Gerstgrasser, Rylan Schaeffer, Apratim Dey, Rafael Rafailov, Henry Sleight, John
Hughes, Tomasz Korbak, Rajashree Agrawal, Dhruv Pai, Andrey Gromov, Daniel A. Roberts,
Diyi Yang, David L. Donoho, and Sanmi Koyejo.
\textit{Is Model Collapse Inevitable? Breaking the Curse of Recursion by Accumulating Real
and Synthetic Data}.
\textit{First Conference on Language Modeling (COLM)}, 2024. arXiv:2404.01413.
\bibitem[Google DeepMind, 2025]{google2024gemini}
Google DeepMind.
\textit{Gemini 2.0 Flash Model Card}.
\url{https://storage.googleapis.com/model-cards/documents/gemini-2-flash.pdf}, 2025.
\bibitem[Guo et~al., 2023]{guo2023curious}
Yanzhu Guo, Guokan Shang, Michalis Vazirgiannis, and Chlo\'e Clavel.
\textit{The Curious Decline of Linguistic Diversity: Training Language Models on Synthetic
Text}.
arXiv:2311.09807, 2023.
\bibitem[Hokamp \& Liu, 2017]{hokamp2017lexically}
Chris Hokamp and Qun Liu.
\textit{Lexically Constrained Decoding for Sequence Generation Using Grid Beam Search}.
In \textit{Proceedings of the 55th Annual Meeting of the Association for Computational
Linguistics}, pp.\ 1535--1546, 2017.
\bibitem[Hosseini \& Uschmajew, 2017]{hosseini2017riemannian}
Seyedehsomayeh Hosseini and Andr\'e Uschmajew.
\textit{A Riemannian Gradient Sampling Algorithm for Nonsmooth Optimization on Manifolds}.
\textit{SIAM Journal on Optimization}, 27(1):173--189, 2017.
\bibitem[Hou et~al., 2024]{mcauley2023amazon}
Yupeng Hou, Jiacheng Li, Zhankui He, An Yan, Xiusi Chen, and Julian McAuley.
\textit{Bridging Language and Items for Retrieval and Recommendation}.
arXiv:2403.03952, 2024.
\bibitem[Hutto \& Gilbert, 2014]{hutto2014vader}
C.J. Hutto and Eric Gilbert.
\textit{VADER: A Parsimonious Rule-Based Model for Sentiment Analysis of Social Media Text}.
In \textit{Proceedings of the Eighth International AAAI Conference on Weblogs and Social
Media (ICWSM)}, 2014.
\bibitem[Jiang et~al., 2025]{jiang2025hivemind}
Liwei Jiang, Yuanjun Chai, Margaret Li, Mickel Liu, Raymond Fok, Nouha Dziri, Yulia
Tsvetkov, Maarten Sap, Alon Albalak, and Yejin Choi.
\textit{Artificial Hivemind: The Open-Ended Homogeneity of Language Models (and Beyond)}.
NeurIPS 2025 Best Paper. arXiv:2510.22954.
\bibitem[Kim \& Mauborgne, 2005]{kim2005blueocean}
W. Chan Kim and Ren\'ee Mauborgne.
\textit{Blue Ocean Strategy: How to Create Uncontested Market Space and Make the
Competition Irrelevant}.
Harvard Business School Press, 2005.
\bibitem[Li et~al., 2026]{llmreview2025novelty}
Weiyue Li, Mingxiao Song, Zhenda Shen, Dachuan Zhao, Yunfan Long, Yi Li, Yongce Li,
Ruyi Yang, and Mengyu Wang.
\textit{LLM Review: Enhancing Creative Writing via Blind Peer Review Feedback}.
arXiv:2601.08003, 2026.
\bibitem[Liu, 2009]{liu2009learning}
Tie-Yan Liu.
\textit{Learning to Rank for Information Retrieval}.
\textit{Foundations and Trends in Information Retrieval}, 3(3):225--331, 2009.
\bibitem[Mikolov et~al., 2013]{mikolov2013word2vec}
Tomas Mikolov, Kai Chen, Greg Corrado, and Jeffrey Dean.
\textit{Efficient Estimation of Word Representations in Vector Space}.
arXiv:1301.3781, 2013.
\bibitem[Nash, 1950]{nash1950bargaining}
John F. Nash.
\textit{The Bargaining Problem}.
\textit{Econometrica}, 18(2):155--162, 1950.
\bibitem[Nemhauser et~al., 1978]{nemhauser1978analysis}
George L. Nemhauser, Laurence A. Wolsey, and Marshall L. Fisher.
\textit{An Analysis of Approximations for Maximizing Submodular Set Functions---I}.
\textit{Mathematical Programming}, 14(1):265--294, 1978.
\bibitem[OpenAI, 2024]{openai2024gpt4o}
OpenAI.
\textit{GPT-4o System Card}.
\url{https://openai.com/index/gpt-4o-system-card/}, 2024.
\bibitem[Qwen Team, 2024]{qwen2024report}
Qwen Team, Alibaba Group.
\textit{Qwen2.5 Technical Report}.
arXiv:2412.15115, 2024.
\bibitem[Rinnooy Kan \& Timmer, 1987]{rinnooykan1987stochastic}
A. H. G. Rinnooy Kan and G. T. Timmer.
\textit{Stochastic Global Optimization Methods, Part II: Multi-Level Methods}.
\textit{Mathematical Programming}, 39(1):57--78, 1987.
\bibitem[Robertson et~al., 1994]{robertson1994okapi}
Stephen E. Robertson, Steve Walker, Susan Jones, Micheline M. Hancock-Beaulieu, and Mike
Gatford.
\textit{Okapi at TREC-3}.
In \textit{TREC-3 Proceedings}, pp.\ 109--126, 1994.
\bibitem[Safari \& Babaei, 2026]{safari2025novelty}
Ali Safari and Sahar Babaei.
\textit{Research Novelty in Information Systems Journals After ChatGPT: Differences
Across Institutional Language Contexts}.
arXiv:2603.22510, 2026.
\bibitem[Shen et~al., 2017]{shen2017text}
Tianxiao Shen, Tao Lei, Regina Barzilay, and Tommi Jaakkola.
\textit{Style Transfer from Non-Parallel Text by Cross-Alignment}.
In \textit{Advances in Neural Information Processing Systems}, volume 30, 2017.
\bibitem[Shumailov et~al., 2023]{shumailov2023curse}
Ilia Shumailov, Zakhar Shumaylov, Yiren Zhao, Yarin Gal, Nicolas Papernot, and Ross Anderson.
\textit{The Curse of Recursion: Training on Generated Data Makes Models Forget}.
arXiv:2305.17493, 2023.
\bibitem[Shumailov et~al., 2024]{shumailov2024collapse}
Ilia Shumailov, Zakhar Shumaylov, Yiren Zhao, Nicolas Papernot, Ross Anderson, and Yarin Gal.
\textit{AI Models Collapse When Trained on Recursively Generated Data}.
\textit{Nature}, 631(8022):755--759, 2024.
\bibitem[Socher et~al., 2013]{socher2013sst}
Richard Socher, Alex Perelygin, Jean Wu, Jason Chuang, Christopher D. Manning, Andrew Y.
Ng, and Christopher Potts.
\textit{Recursive Deep Models for Semantic Compositionality Over a Sentiment Treebank}.
In \textit{Proceedings of the 2013 Conference on Empirical Methods in Natural Language
Processing (EMNLP)}, pp.\ 1631--1642, 2013.

\end{thebibliography}
\end{document}